\documentclass{article}

\usepackage{iclr2027_conference,times}

\usepackage{amsmath}
\usepackage{amssymb}
\usepackage{mathtools}
\usepackage{amsthm}

\usepackage{booktabs}
\usepackage{multirow}
\usepackage{array}
\usepackage{graphicx}
\usepackage{microtype}
\usepackage[table]{xcolor}
\usepackage{url}
\usepackage{enumitem}
\usepackage{algorithm}
\usepackage{algpseudocode}
\usepackage{wrapfig}
\usepackage{caption}

\usepackage{pifont}

\usepackage{hyperref}

\definecolor{todo}{RGB}{150,40,40}
\definecolor{bottlegreen}{RGB}{10,150,50}
\definecolor{opsdyellow}{RGB}{250,212,107}
\definecolor{opasdblue}{HTML}{5D8DFE}

\theoremstyle{plain}
\newtheorem{theorem}{Theorem}[section]

\newtheorem{lemma}[theorem]{Lemma}

\theoremstyle{definition}

\theoremstyle{remark}

\title{
Teach Yourself Where to Look:
On-Policy Attention Self-Distillation for Reasoning
}

\author{
\begin{tabular}{@{}l@{}}
\textbf{Safaeid Hossain Arib}\textsuperscript{1,\ding{41}},
\textbf{Rabeya Akter}\textsuperscript{2},
\textbf{Ismam Nur Swapnil}\textsuperscript{1},\\
\textbf{Md. Faiyaz Abdullah Sayeedi}\textsuperscript{3},
\textbf{Tasnim Mohiuddin}\textsuperscript{4,*},
\textbf{Md Mofijul Islam}\textsuperscript{5,*,\(\ddagger\)}\\[0.5em]
{\normalfont
\textsuperscript{1}ACI PLC, Bangladesh
\qquad
\textsuperscript{2}University of Dhaka, Bangladesh}\\
{\normalfont
\textsuperscript{3}BRAC University, Bangladesh
\qquad
\textsuperscript{4}QCRI, Qatar
\qquad
\textsuperscript{5}Amazon GenAI, USA}
\end{tabular}
}

\iclrfinalcopy

\begin{document}

\maketitle


\begingroup
\renewcommand{\thefootnote}{\fnsymbol{footnote}}
\footnotetext[1]{Equal supervision.}
\footnotetext[3]{Work done outside role at Amazon.}
\endgroup

\begingroup
\renewcommand{\thefootnote}{}
\footnotetext{
\ding{41}\ Corresponding author: \texttt{safaeid48@gmail.com}
}
\endgroup



\begin{abstract}

On-policy self-distillation trains reasoning models on their own trajectories using dense token distribution guidance from a privileged teacher with access to a verified solution. This supervision transfers what the teacher predicts without directly transferring where it attends within the preceding context. We introduce On-Policy Attention Self-Distillation (OPASD), which complements token-level supervision with solution-conditioned attention distillation. Because the privileged teacher can attend to verified-solution tokens unavailable to the student, OPASD projects teacher attention onto student-visible positions and renormalizes the resulting distribution before alignment. Across three model sizes and four competition-level mathematics benchmarks, OPASD consistently outperforms token-only OPSD, improving average accuracy by 4.98 to 8.40 percentage points.      OPASD also avoids the response-length inflation and performance degradation observed with token-only distillation, reducing generated rollout tokens by 73.9\% and estimated model compute by 72.6\% while training 1.53× faster. These results show that solution-conditioned attention provides a complementary supervision signal that makes on-policy self-distillation more accurate, stable, and compute-efficient.

\end{abstract}


\section{Introduction}
\label{sec:intro}


Improving language models' reasoning through post-training requires supervision that is informative and aligned with their own generated trajectories. Reinforcement learning with verifiable rewards (RLVR) learns from self-generated responses but typically provides a response-level reward with limited guidance for individual token decisions \citep{shao2024deepseekmath, guo2025deepseek, yu2026dapo}. On-policy distillation (OPD) provides finer-grained supervision by having a teacher evaluate each prefix of a student-generated trajectory and provide a full next-token distribution \citep{agarwal2024policy}. On-policy self-distillation (OPSD) further removes the need for a separately trained teacher by initializing both student and teacher from the same model \citep{zhao2026self}. The student generates a reasoning trajectory from the problem alone, while the teacher evaluates that trajectory with access to a verified solution. OPSD thus provides dense, solution-conditioned token-level supervision along the student's own reasoning process.


Despite this advantage, OPSD transfers privileged guidance only through next-token distributions. Such token-level supervision guides what the student predicts but does not directly guide where it allocates attention across the preceding context~\citep{zhao2026self, yang2026oprd}. This distinction matters in multi-step reasoning, where later steps often depend on constraints and intermediate results established in earlier steps~\citep{chen2025improving, guo2025learning}. The privileged teacher's attention distribution offers a distinct solution-conditioned signal that OPSD leaves unused. This raises the question: \textit{How can on-policy self-distillation teach the student not only what to predict but also where to look?}



Unlike prior attention-alignment work on multimodal grounding or internal self-distillation~\citep{li2026reinforced, liu2026oisd}, extending OPSD with attention-level supervision introduces a challenge absent from next-token distribution alignment. The teacher and student next-token distributions are directly comparable because both are defined over the same vocabulary. But attention distributions are defined over positions in their respective conditioning contexts. The student conditions on the problem and its generated prefix, while the privileged teacher additionally sees the verified solution. The teacher can therefore assign attention to solution tokens absent from the student's context. Direct alignment would require the student to match probability mass on positions it cannot observe, whereas simply discarding those positions leaves the teacher's remaining attention unnormalized. This creates a support mismatch between the teacher's solution-conditioned attention distribution and the student's attention distribution. An effective distillation target must preserve the teacher's solution-conditioned guidance while remaining fully defined over positions visible to the student. 

We address this challenge with On-Policy Attention Self-Distillation (OPASD), which complements token-level OPSD with solution-conditioned attention supervision along the same student-generated trajectory, as illustrated in Figure~\ref{fig:opsd-opasd}. At each reasoning step, OPASD projects the privileged teacher's attention onto the student-visible support by removing solution-only positions and renormalizing the remaining mass. The resulting target remains conditioned on the verified solution but is defined entirely over positions visible to the student. 


Across Qwen3 models from 1.7B to 8B and four competition-level mathematics benchmarks, OPASD consistently outperforms token-only OPSD, improving average accuracy by 4.98 to 8.40 percentage points.  
Importantly, the improvement is accompanied by substantially lower training cost rather than additional computational burden. OPASD reduces generated rollout tokens by 73.9\% and estimated model compute by 72.6\%, while training 1.53× faster despite the additional attention objective. 
OPASD also exhibits more stable training and validation performance, shorter reasoning trajectories, and smaller shifts in epistemic verbalization. Together, these results suggest that solution-conditioned attention provides a complementary supervision signal that improves not only reasoning accuracy but also the stability and efficiency of on-policy self-distillation.








In summary, we make three contributions:
\begin{itemize}
    \item We introduce OPASD, extending privileged on-policy self-distillation with solution-conditioned attention supervision and a student-support projection that resolves the attention-support mismatch between privileged teachers and students.
     \item We show consistent improvements over token-only OPSD across three model scales and four competition-level mathematics benchmarks, with average accuracy gains of 4.98 to 8.40 points.
     \item We show that OPASD substantially improves the efficiency and stability of on-policy training, reducing generated rollout tokens by 73.9\% and estimated compute by 72.6\%, while training 1.53× faster.
\end{itemize}

\begin{figure}[!t]
\centering
\includegraphics[width=0.85\textwidth]{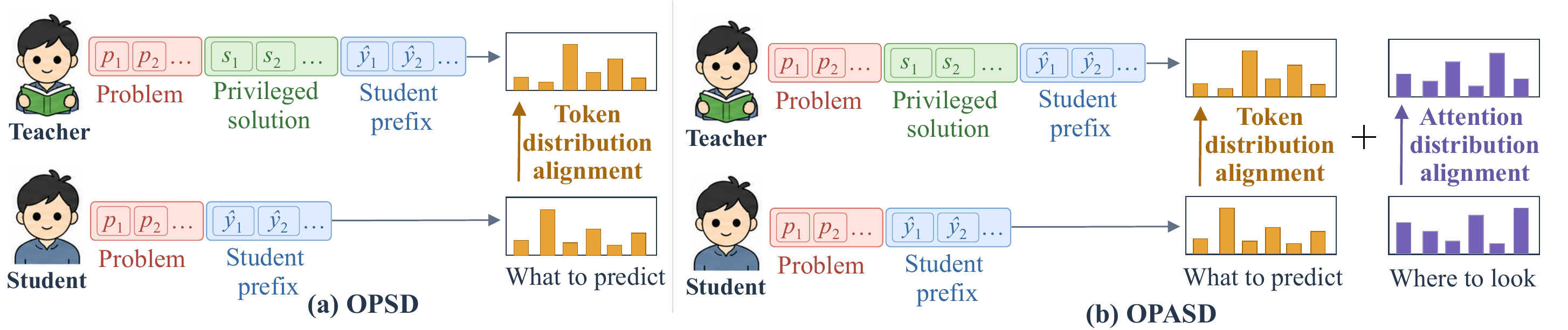}
\caption{Comparison of OPSD and OPASD. (a) OPSD provides token-level supervision through teacher-student next-token distribution alignment. (b) OPASD provides both token- and attention-level supervision through joint next-token and projected attention distribution alignment.}
\label{fig:opsd-opasd}
\end{figure}
\section{Related Work}
\label{sec:related}


\textbf{On-Policy Distillation.}
Conventional knowledge distillation for autoregressive language models trains students on fixed reference or teacher-generated sequences, creating a mismatch between the prefixes used for distillation and those generated by the student at inference time \citep{hinton2015distilling, kim2016sequence, bengio2015scheduled}.
OPD mitigates this mismatch by sampling trajectories from the student and querying the teacher at the resulting student-generated prefixes, providing dense supervision along the student’s own rollout \citep{agarwal2024policy,gu2024minillm,yang2026learning,li2026rethinking,fu2026revisiting,hou2026uni,ko2026scaling,song2026survey}. OPSD extends this principle to self-distillation by using the same model as both student and privileged teacher under different conditioning contexts. The student generates the reasoning trajectory from the problem alone, while the privileged teacher additionally conditions on the verified solution and provides token-level supervision along that same trajectory \citep{zhao2026self}.
Related work explores self-distillation with additional context, demonstrations, environment feedback, and other forms of guidance 
\citep{snell2022learning,qi2025context,hubotter2026reinforcement,shenfeld2026,ye2026policy,penaloza2026privileged,stein2026gates,sang2026policy,ding2026hdpo,li2026unifying,yang2026self,he2026self}.  
Recent studies also examined when on-policy distillation is effective and explored different ways to improve the supervision it provides
\citep{li2026rethinking,kim2026does,harne2026privileged,yang2026learning,stein2026gates,yang2026ogls,jang2026stable,jin2026entropy,jia2026asymmetric,xu2026tip,zhu2026many,li2026rise}.




\textbf{Representation Distillation.} Intermediate-representation distillation complements output matching by aligning teacher and student representations. Early methods match features at selected layers or relations between layers \citep{romero2015fitnets,yim2017gift}, while later work extends intermediate supervision to Transformer language models through hidden-state, attention, and relation-based objectives \citep{sun2019patient,sanh2019distilbert,jiao2020tinybert,sun2020mobilebert,wang2020minilm,wang2021minilmv2,wang2023distill}. These methods generally train on fixed inputs, whereas recent work aligns teacher and student hidden states at selected layers and response positions along student-generated trajectories \citep{yang2026oprd}. This brings representation-level supervision to states visited by the current policy. Unlike hidden-state alignment, OPASD explicitly aligns attention distributions over the preceding context, with the additional challenge that its privileged teacher observes solution tokens unavailable to the student.




\textbf{Attention Distillation.}
Attention has long served as an intermediate target for knowledge distillation, from transferring attention maps in convolutional networks to aligning attention matrices and self-attention relations in Transformers \citep{zagoruyko2016paying, jiao2020tinybert, sun2020mobilebert, wang2020minilm, wang2021minilmv2, wang2023distill, zhao2024no, jin2024align}.
More recent work has applied attention supervision to language reasoning, where teacher attention is used to guide students toward task-relevant information during multi-step reasoning \citep{chen2025improving, guo2025learning}.
Attention-level supervision has also been explored in multimodal post-training, including on-policy attention distillation \citep{kim2026compodistill, li2026reinforced}.
Recent work has further considered internal attention self-distillation, aligning intermediate-layer attention with a detached final-layer teacher over the same reasoning context \citep{liu2026oisd}.
These studies establish attention as a useful source of supervision but do not address attention alignment in privileged on-policy self-distillation, where a solution-conditioned teacher evaluates student-generated reasoning over a different context. OPASD resolves this mismatch by projecting solution-conditioned teacher attention onto student-visible positions before aligning it with student attention alongside the token-level objective.


\section{On-Policy Attention Self-Distillation}
\label{sec:method}

Building on OPSD \citep{zhao2026self}, we extend on-policy self-distillation beyond token-level supervision with an attention-level objective.
Along each student-generated reasoning trajectory, the same privileged teacher provides supervision over both next-token and attention distributions.
Because the teacher additionally observes the reference solution, its attention distribution is defined over positions that are unavailable to the student and therefore cannot be aligned directly with the student's attention distribution.
We resolve this support mismatch by projecting the teacher attention onto the student-visible support and combining the resulting attention supervision with the original token-level objective.

\begin{figure}[!tb]
\centering
\includegraphics[width=0.95\textwidth]{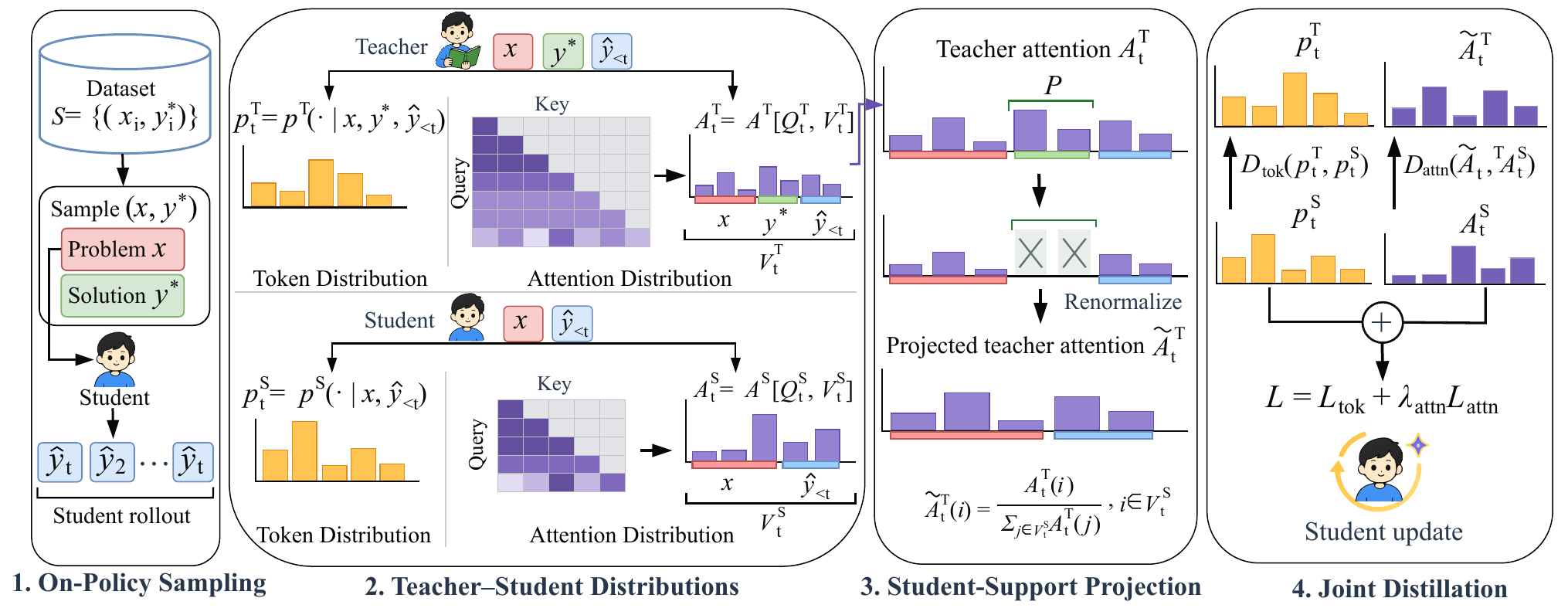}
\caption{Overview of On-Policy Attention Self-Distillation (OPASD). (1) Given a problem \(x\) and its verified solution \(y^\star\), the student samples an on-policy reasoning trajectory \(\hat{y}\) from \(x\) alone. (2) The student and privileged teacher evaluate the same trajectory under their respective conditioning contexts, producing next-token distributions \(p_t^S,p_t^T\) and attention distributions \(A_t^S,A_t^T\). (3) Because the teacher can attend to reference-solution positions unavailable to the student, its attention is restricted to the student-visible support and renormalized to obtain \(\widetilde{A}_t^T\). (4) OPASD jointly aligns the token and projected attention distributions, with gradients propagated only through the student.}
\label{fig:opasd-overview}
\end{figure}

\subsection{On-Policy Self-Distillation}
\label{sec:opsd}

We consider a reasoning dataset
$\mathcal{S}=\{(x_i,y_i^\star)\}_{i=1}^{N}$,
where $x_i$ denotes a problem and $y_i^\star$ its verified reference solution.
Following OPSD, one language model defines a privileged teacher and a student under different conditioning contexts; the teacher uses the initial parameters $\theta_0$, kept frozen throughout training.
The teacher conditions on both the problem and its reference solution, while the student conditions only on the problem, matching the inference-time condition,
\begin{equation}
\label{eq:eq1}
p_T(\cdot \mid x,y^\star)
\triangleq
p_{\theta_0}(\cdot \mid x,y^\star),
\qquad
p_S(\cdot \mid x)
\triangleq
p_\theta(\cdot \mid x).
\end{equation}

For each problem $x$, the student samples an on-policy reasoning trajectory
\begin{equation}
\label{eq:eq2}
\hat{y}
=
(\hat{y}_1,\ldots,\hat{y}_T)
\sim
p_S(\cdot \mid x).
\end{equation}
The privileged teacher evaluates this same trajectory rather than generating its own.
At reasoning step $t$, the two policies produce
\begin{equation}
\label{eq:eq3}
p_t^S
=
p_S(\cdot \mid x,\hat{y}_{<t}),
\qquad
p_t^T
=
p_T(\cdot \mid x,y^\star,\hat{y}_{<t}).
\end{equation}
The token-level distillation objective is
\begin{align}
\label{eq:token_loss}
\mathcal{L}_{\mathrm{tok}}
=
\frac{1}{T}
\sum_{t=1}^{T}
D_{\mathrm{tok}}
\left(
p_t^T,
p_t^S
\right),
\end{align}
where $D_{\mathrm{tok}}$ denotes the token-distribution divergence.
Gradients are propagated only through the student, while the privileged teacher provides a detached target.
We retain this objective and additionally use the privileged teacher for attention-level supervision.

\subsection{Support Mismatch in Privileged Attention}
\label{sec:support_mismatch}

The token distributions $p_t^S$ and $p_t^T$ are directly comparable because both are defined over the same vocabulary.
Attention distributions are instead defined over context positions, and the teacher context additionally contains reference-solution positions that are absent from the student's context.
We average the attention distributions across the $N_h$ heads of the final transformer layer,
\begin{equation}
A^S
=
\frac{1}{N_h}
\sum_{h=1}^{N_h} A_{\mathrm{last},h}^S,
\qquad
A^T
=
\frac{1}{N_h}
\sum_{h=1}^{N_h} A_{\mathrm{last},h}^T.
\label{eq:attention_aggregation}
\end{equation}

For each generated token $\hat{y}_t$, let $Q_t^S$ and $Q_t^T$ denote its corresponding query positions in the student and teacher sequences, and let $V_t^S$ and $V_t^T$ denote their visible supports.
Let $\mathcal{P}$ denote the reference-solution positions available only to the teacher, such that
$V_t^T = V_t^S \cup \mathcal{P}$ and
$V_t^S \cap \mathcal{P} = \varnothing$.
The corresponding attention distributions are
$A_t^S = A^S[Q_t^S,V_t^S]$ and
$A_t^T = A^T[Q_t^T,V_t^T]$.

The privileged teacher can therefore assign attention mass to positions in $\mathcal{P}$ that are unavailable to the student.
Consequently, $A_t^S$ and $A_t^T$ are defined over different supports and cannot be aligned directly.
We resolve this mismatch by projecting the teacher attention onto the student-visible support $V_t^S$.

\begin{algorithm}[t]
\caption{On-Policy Attention Self-Distillation (OPASD)}
\label{alg:opasd}
\begin{algorithmic}[1]

\Require Reasoning dataset
$\mathcal{S}=\{(x_i,y_i^\star)\}_{i=1}^{N}$;
language model $p_\theta$;
token divergence $D_{\mathrm{tok}}$;
attention divergence $D_{\mathrm{attn}}$;
attention-loss coefficient $\lambda_{\mathrm{attn}}$

\State Define $p_S(\cdot\mid x)$ from $p_\theta$ and $p_T(\cdot\mid x,y^\star)$ from the frozen initialization $p_{\theta_0}$

\While{not converged}

    \State Sample a minibatch $\mathcal{B}\subset\mathcal{S}$

    \ForAll{$(x,y^\star)\in\mathcal{B}$}

        \State Sample an on-policy trajectory
        $\hat{y}\sim p_S(\cdot\mid x)$

        \State Evaluate both policies on $\hat{\mathbf{y}}$ to obtain $\{p_t^S,p_t^T,A_t^S,A_t^T\}_{t=1}^{T}$; detach teacher

        \State Compute token-level distillation:
        $\mathcal{L}_{\mathrm{tok}}(x,y^\star)
        \gets
        \frac{1}{T}\sum_{t=1}^{T}
        D_{\mathrm{tok}}\!\left(p_t^T,p_t^S\right)$

        \State Project teacher attention:
$\widetilde{A}_t^T(i) \gets
\frac{A_t^T(i)}
{\sum_{j\in V_t^S} A_t^T(j)},
\quad i\in V_t^S$

        \State Compute attention-level distillation:
        $\mathcal{L}_{\mathrm{attn}}(x,y^\star)
        \gets
        \frac{1}{T}\sum_{t=1}^{T}
        D_{\mathrm{attn}}\!\left(\widetilde{A}_t^T,A_t^S\right)$

    \EndFor

    \State Compute the minibatch loss:
    $\mathcal{L}_{\mathcal{B}}
    \gets
    \frac{1}{|\mathcal{B}|}
    \sum_{(x,y^\star)\in\mathcal{B}}
    \left[
    \mathcal{L}_{\mathrm{tok}}(x,y^\star)
    +
    \lambda_{\mathrm{attn}}
    \mathcal{L}_{\mathrm{attn}}(x,y^\star)
    \right]$

    \State Update the student parameters using
    $\nabla_\theta\mathcal{L}_{\mathcal{B}}$

\EndWhile

\end{algorithmic}
\end{algorithm}

\subsection{Student-Support Projection}
\label{sec:projection}

To obtain a teacher attention target on the same support as the student, we restrict $A_t^T$ to the student-visible positions $V_t^S$ and renormalize the remaining mass (Theorem~\ref{thm:projection}).
For each position $i \in V_t^S$, the projected teacher attention is
\begin{equation}
\widetilde{A}_t^T(i)
=
\frac{A_t^T(i)}
{\sum_{j \in V_t^S} A_t^T(j)}.
\label{eq:projection}
\end{equation}
The resulting distribution $\widetilde{A}_t^T$ is defined over the same support as $A_t^S$.
Although attention assigned to the reference-solution positions is removed, the target remains solution-conditioned because $A_t^T$ is computed under the privileged context $(x,y^\star,\hat{y}_{<t})$.

The projection also preserves the teacher's relative attention over the student-visible positions.
Since every position in $V_t^S$ is rescaled by the same normalization factor, for any $i,j\in V_t^S$ with $A_t^T(j)>0$,
\begin{equation}
\frac{\widetilde{A}_t^T(i)}
     {\widetilde{A}_t^T(j)}
=
\frac{A_t^T(i)}
     {A_t^T(j)}.
\label{eq:ratio_preservation}
\end{equation}

\subsection{Joint Distillation}
\label{sec:objective}

After projection, the student attention $A_t^S$ and the projected teacher attention $\widetilde{A}_t^T$ are defined over the same support. We distill the projected teacher attention using the generalized Jensen-Shannon divergence (JSD). Let $M_t^{(\beta)} =\beta \widetilde{A}_t^T+(1-\beta)A_t^S$, where $\beta\in(0,1)$. The per-step attention objective is
\begin{equation}
\ell_{\mathrm{attn}}(t)
= D_{\mathrm{attn}}
\left(
\widetilde{A}_t^T,A_t^S
\right)
\triangleq
\beta
D_{\mathrm{KL}}
\left(
\widetilde{A}_t^T
\,\Vert\,
M_t^{(\beta)}
\right)
+
(1-\beta)
D_{\mathrm{KL}}
\left(
A_t^S
\,\Vert\,
M_t^{(\beta)}
\right).
\label{eq:attention_loss}
\end{equation}
After rescaling, $\beta\to0$ and $\beta\to1$ recover the forward and reverse KL, which we use as the endpoint variants.
\begin{equation}
\mathcal{L}_{\mathrm{attn}}
=
\frac{1}{T}
\sum_{t=1}^{T}
\ell_{\mathrm{attn}}(t).
\label{eq:attention_loss_total}
\end{equation}
The training objective combines the token-level OPSD loss with the proposed attention-level loss,
\begin{equation}
\mathcal{L}
=
\mathcal{L}_{\mathrm{tok}}
+
\lambda_{\mathrm{attn}}
\mathcal{L}_{\mathrm{attn}}.
\label{eq:total_loss}
\end{equation}
Here $\lambda_{\mathrm{attn}}$ controls the strength of attention supervision. Only the student branch receives gradients; the privileged teacher and reference solution are used only during training. Algorithm~\ref{alg:opasd} summarizes the complete training procedure.

\section{Experiments}
\label{sec:experiments}

\subsection{Experimental Setup}
\label{sec:experimental setup}
\textbf{Models and datasets.}
We experiment with instruction-tuned models from the Qwen3~\citep{team2025qwen3} family: Qwen3-1.7B, Qwen3-4B, Qwen3-8B. For training, we use the mathematical reasoning subset of OpenThoughts~\citep{guha2026openthoughts}, sampling up to 30K problem-solution pairs. We evaluate on competition-level mathematics benchmarks including AIME 2024, AIME 2025, AIME 2026, and HMMT 2025.

\textbf{Baselines.}
We compare against two baselines: (1) \textsc{Base}, the original instruction-tuned checkpoint without post-training; and (2) \textsc{OPSD}~\citep{zhao2026self}, the token-only on-policy self-distillation method. OPSD and our method are trained using the same dataset, model initialization, and training budget.

\textbf{Implementation Details.}
We keep the privileged teacher fixed at the initial checkpoint throughout training. For token-level supervision, we use full-vocabulary logit distillation. For attention-level supervision, we align the student's final-layer attention with the projected teacher attention over the student-visible support. All experiments are conducted on H100 or H200 GPUs using LoRA~\citep{hu2021lora}. Complete training and evaluation configurations are provided in Appendix~\ref{sec:hyperparameters}.

\subsection{Comparison of On-Policy Self-Distillation Methods}
\label{sec:main results}
\begin{table}[!t]
    \centering
    \caption{Comparison of on-policy self-distillation methods across model scales.
Avg@12 accuracy (\%) of the instruction-tuned Base model, token-only OPSD, and OPASD on four competition-level mathematical reasoning benchmarks. OPASD achieves the highest performance across all three Qwen3 model sizes.}
    \label{tab:main_results}
    
    \small
    \setlength{\tabcolsep}{4pt}
    \renewcommand{\arraystretch}{1.0}
    
    \begin{tabular}{l|ccccc}
        \toprule
        \textbf{Method} 
        & \textbf{AIME24} 
        & \textbf{AIME25}
        & \textbf{AIME26}
        & \textbf{HMMT25} 
        & \textbf{Average} \\
        \midrule
        
        \multicolumn{5}{l}{\textit{Qwen3-8B}} \\
        \quad Base  & 60.83 & 48.89 & 51.39 & 29.17 & 47.57 \\
        \quad OPSD          & 59.16 & 51.97 & 53.33 & 33.33 & 49.45 \\
        \quad OPASD      & \textbf{67.50} & \textbf{57.22} & \textbf{61.11} & \textbf{35.83} & \textbf{55.42} \\
        
        \midrule
        \multicolumn{5}{l}{\textit{Qwen3-4B}} \\
        \quad Base & 58.33 & 47.50 & 51.11 & 30.27 & 46.80 \\
        \quad OPSD          & 51.11 & 43.88 & 46.11 & 28.33 & 42.36 \\
        \quad OPASD      & \textbf{63.33} & \textbf{50.27} & \textbf{55.56} & \textbf{33.89} & \textbf{50.76} \\
        
        \midrule
        \multicolumn{5}{l}{\textit{Qwen3-1.7B}} \\
        \quad Base  & 33.61 & 30.28 & 31.94 & 19.17 & 28.75 \\
        \quad OPSD  & 36.70 & 28.33 & 32.78 & 18.33 & 29.04 \\
        \quad OPASD & \textbf{43.05} & \textbf{35.27} & \textbf{34.44} & \textbf{23.33} & \textbf{34.02} \\

        \bottomrule
    \end{tabular}
\end{table}

Table~\ref{tab:main_results} compares OPASD with the instruction-tuned base model and token-only OPSD across three Qwen3 model sizes. OPASD achieves the highest average performance at every scale and outperforms OPSD on all four benchmarks. The average score rises from $29.04$ to $34.02$ on Qwen3-1.7B, from $42.36$ to $50.76$ on Qwen3-4B, and from $49.45$ to $55.42$ on Qwen3-8B, giving gains of $4.98$, $8.40$, and $5.97$ points, respectively. OPASD also surpasses the base model at every scale. These consistent gains show that attention-level supervision complements the token-level objective across model scales and benchmark difficulty levels.


We further evaluate OPASD across sampling budgets, multiple training seeds, and test-time generation budgets in Appendices~\ref{sec:pass_at_k_dynamics}, \ref{sec:random_seeds}, and \ref{sec:generation_budget}, respectively. These analyses show that OPASD improves both Pass@1 and Pass@12, remains stable across training seeds, and consistently outperforms OPSD across generation budgets.


\subsection{Ablation Studies and Analysis}
\label{sec:ablation}







We examine three design choices in OPASD: (1) the divergence used for attention distillation (Section~\ref{sec:attention_divergence}),  (2) the weight of attention supervision (Section~\ref{sec:attention_weight}), and (3) the depth of attention distillation (Section~\ref{sec:ablation_layers}). Additional experiments evaluate how teacher-only attention is handled (Appendix~\ref{sec:privileged_attention}), the token-loss clipping strategy (Appendix~\ref{sec:ablation_clipping}), and the individual contributions of token- and attention-level supervision (Appendix~\ref{sec:objective_components}). We then present a qualitative comparison of OPSD and OPASD reasoning trajectories (Section~\ref{sec:qualitative_analysis}), analyze their performance and behavioral dynamics (Section~\ref{sec:training_dynamics}), and compare their efficiency (Section~\ref{sec:training_efficiency}).


\subsubsection{Attention Divergence Objective}
\label{sec:attention_divergence}

\begin{wraptable}{r}{0.52\textwidth}
\centering
\caption{Effect of the attention divergence objective. JSD achieves the highest average accuracy across the three objectives.}
\label{tab:divergence_comparison}
\small
\setlength{\tabcolsep}{1pt}
\renewcommand{\arraystretch}{1.0}
\begin{tabular}{lccccc}
\toprule
\textbf{Objective} & \textbf{AIME24} & \textbf{AIME25} &
\textbf{AIME26} & \textbf{HMMT25} & \textbf{Avg} \\
\midrule
FKL & 39.72 & \textbf{33.05} & 33.05 & \textbf{21.11} & 31.73 \\
RKL & 40.83 & 32.22 & \textbf{35.83} & 20.56 & 32.36 \\
JSD        & \textbf{41.94} & 32.50 & \textbf{35.83} & \textbf{21.11} & \textbf{32.85} \\
\bottomrule
\end{tabular}
\end{wraptable}

We compare forward KL (FKL) ($\widetilde A_t^T \,\|\, A_t^S)$, reverse KL (RKL) ($A_t^S \,\|\, \widetilde A_t^T)$, and JSD  ($\beta = \,0.5$) for attention distillation on Qwen3-1.7B, while keeping all other settings fixed. 
As shown in Table~\ref{tab:divergence_comparison}, JSD achieves the highest average accuracy of 32.85, compared with 32.36 for reverse KL and 31.73 for forward KL.
JSD achieves this advantage through a symmetric and bounded objective that balances the mode-covering behavior of forward KL with the mode-seeking behavior of reverse KL. This balance preserves the broader context emphasized by the teacher while retaining focus on its most relevant positions.





\subsubsection{Strength of Attention Supervision}
\label{sec:attention_weight}

\setlength{\intextsep}{6pt}
\setlength{\columnsep}{10pt}
\begin{wraptable}{r}{0.52\textwidth}
\centering
\caption{Effect of attention loss weight. With the token-loss coefficient fixed at $1$, $\lambda_{\mathrm{attn}}=0.5$ yields the highest average performance.}
\label{tab:lambda_attn_ablation}
\small
\setlength{\tabcolsep}{2pt}
\renewcommand{\arraystretch}{1.0}
\begin{tabular}{lccccc}
\toprule
$\boldsymbol{\lambda_{\mathrm{attn}}}$ &
\textbf{AIME24} & \textbf{AIME25} & \textbf{AIME26} &
\textbf{HMMT25} & \textbf{Avg} \\
\midrule
0.25 & \textbf{43.33} & 34.16 & 35.56 & 21.94 & 33.75 \\
0.5  & 43.05 & \textbf{35.27} & 34.44 & \textbf{23.33} & \textbf{34.02} \\
1.0  & 41.94 & 32.50 & \textbf{35.83} & 21.11 & 32.85 \\
2.0  & 39.44 & 32.28 & 34.44 & 21.11 & 31.82 \\
\bottomrule
\end{tabular}
\end{wraptable}

We vary the attention-loss weight $\lambda_{\mathrm{attn}}$ on Qwen3-1.7B while keeping the token-loss coefficient fixed at 1. As shown in Table~\ref{tab:lambda_attn_ablation}, the highest average score is $34.02$ at $\lambda_{\mathrm{attn}}=0.5$, while weights of $1.0$ and $2.0$ yield $32.85$ and $31.82$, respectively. This suggests that attention supervision is most effective as a complementary signal, with larger weights placing excessive emphasis on matching the teacher's attention and reducing the influence of the token-level objective.




\subsubsection{Attention Distillation Depth}
\label{sec:ablation_layers}


\begin{wraptable}{r}{0.52\textwidth}
\centering
\caption{\textbf{Effect of attention distillation depth.}
Using only the final transformer layer achieves the highest average performance compared with using the final two or three layers.}
\label{tab:attention_layers}
\small
\setlength{\tabcolsep}{1.5pt}
\renewcommand{\arraystretch}{1.0}
\begin{tabular}{lccccc}
\toprule
\textbf{Layers} & \textbf{AIME24} & \textbf{AIME25} &
\textbf{AIME26} & \textbf{HMMT25} & \textbf{Avg} \\
\midrule
1  & \textbf{43.05} & \textbf{35.27} & \textbf{34.44} & \textbf{23.33} & \textbf{34.02} \\
2 & 41.13 & 34.57 & 33.36 & 21.11 & 32.54 \\
3 & 39.44 & 30.83 & 33.88 & 20.00 & 31.04 \\
\bottomrule
\end{tabular}
\end{wraptable}

We ablate the number of final transformer layers used for attention distillation on Qwen3-1.7B. For multi-layer variants, we compute a separate loss from the head-averaged attention at each layer and average the losses. As shown in Table~\ref{tab:attention_layers}, using only the final layer achieves the highest average score of $34.02$, compared with $32.54$ and $31.04$ for the final two and three layers. Giving earlier-layer attention equal weight shifts some supervision toward broader intermediate patterns. Distilling only the final layer works better because its attention acts on representations refined by the preceding layers and is more closely tied to the model’s next-token prediction.






\subsubsection{Qualitative Analysis}
\label{sec:qualitative_analysis}

Figure~\ref{fig:reasoning} compares the base model, OPSD, and OPASD on the same number-theory problem. The base model commits to an incomplete solution, while OPSD explores additional possibilities but evaluates them incorrectly. OPASD carries this broader analysis through consistently and reaches the correct solution. OPASD’s attention supervision provides additional guidance on earlier calculations, helping the student evaluate later candidates more consistently than token-level supervision alone.
The probability and geometry examples in Appendix~\ref{sec:add_qualitative} illustrate a similar distinction.

\begin{figure}[!tb]
\centering
\includegraphics[width=0.85\textwidth]{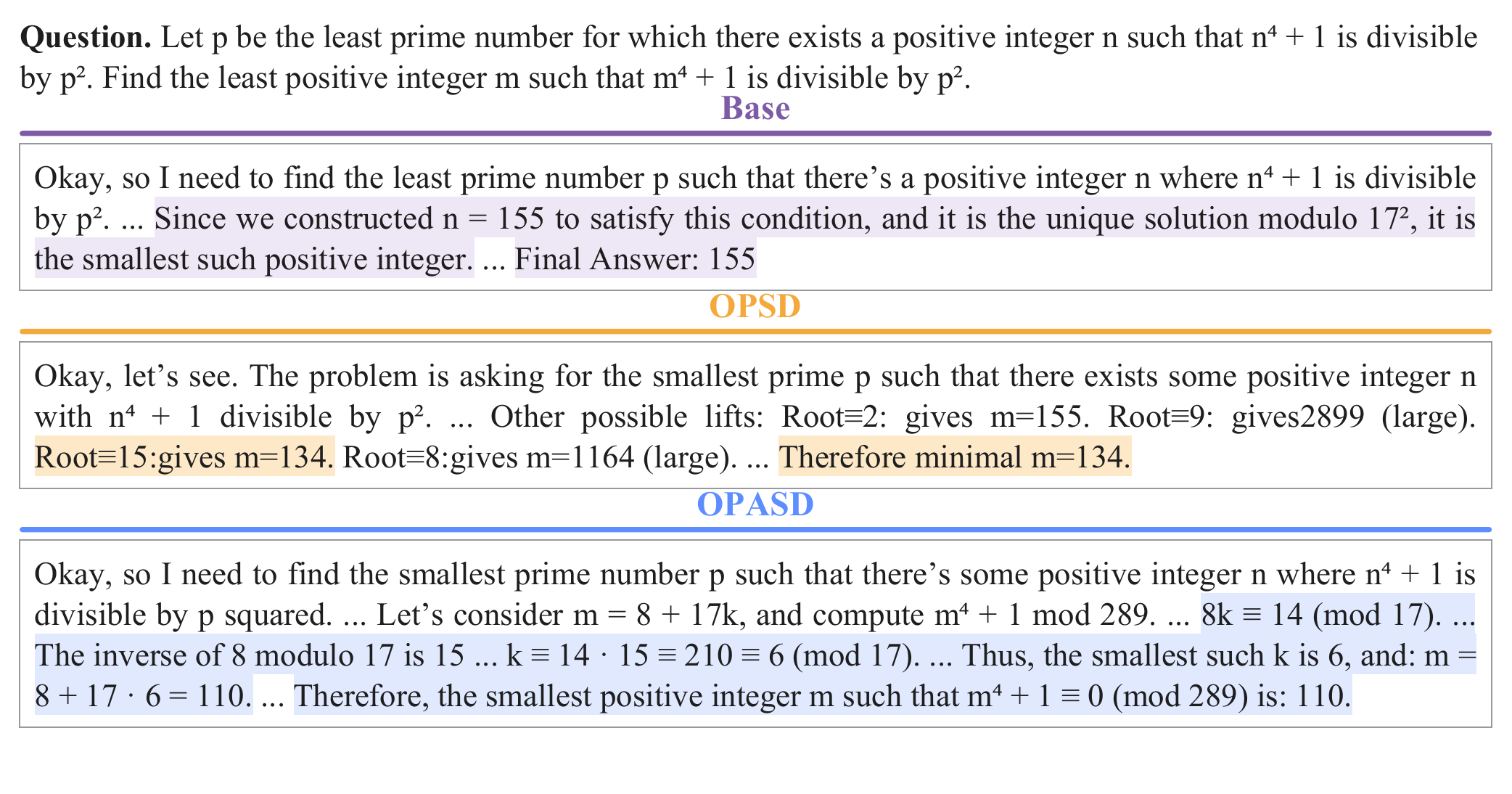}
\caption{Qualitative comparison of reasoning across the base model, OPSD, and OPASD on an AIME24 number-theory problem. The base model overlooks valid alternatives, OPSD considers more candidates but evaluates one incorrectly, and OPASD identifies the correct minimum. Highlighted excerpts mark the decisive steps.}
\label{fig:reasoning}
\end{figure}
\subsubsection{Performance and Behavioral Dynamics}
\label{sec:training_dynamics}


Figure~\ref{fig:training_dynamics} compares OPSD and OPASD in training and validation accuracy, response length, and epistemic-token usage.

\begin{figure}[!b]
\centering
\includegraphics[width=0.95\textwidth]{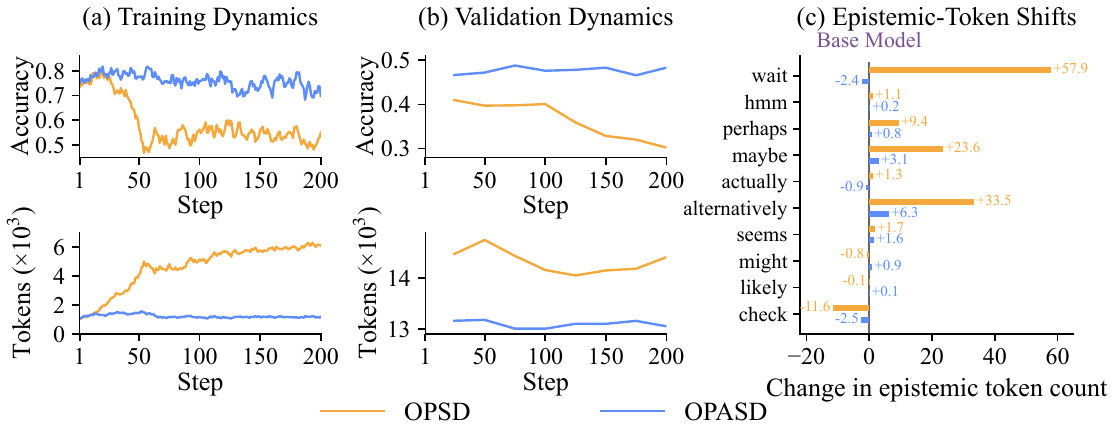}
\caption{Training dynamics and epistemic-token shifts on Qwen3-4B.
OPASD maintains stable training and validation performance with concise responses and limited epistemic-token shifts, whereas OPSD exhibits declining accuracy, response-length inflation, and substantially greater use of uncertainty and reconsideration markers.}
\label{fig:training_dynamics}
\end{figure}



Figure~\ref{fig:training_dynamics}(a) shows that OPASD maintains rollout accuracy between $0.70$ and $0.80$, with response lengths remaining between $1.0$K and $1.6$K tokens. In contrast, OPSD's accuracy declines to $0.50$--$0.60$ as its response length grows from $1.1$K to $6.1$K tokens. Figure~\ref{fig:training_dynamics}(b) shows that this contrast extends to validation. Across the recorded checkpoints, OPASD's Avg@12 accuracy varies by only $2.22$ points, whereas OPSD's accuracy spans $10.76$ points and declines steadily after approximately $100$ steps. OPASD also maintains consistently shorter validation responses. Because OPSD optimizes token-level teacher-student agreement without an explicit outcome signal, it does not directly distinguish continuation that improves the final answer from continuation that merely extends the reasoning process. OPASD adds attention supervision that guides the student toward context emphasized by the solution-conditioned teacher. This contextual signal helps preserve relevant information and limit repetitive continuation as the student progressively learns the teacher's attention distribution (Appendix~\ref{sec:attention_alignment_dynamics}).

Figure~\ref{fig:training_dynamics}(c) compares changes in the mean number of epistemic tokens (expressions of uncertainty, reconsideration, or self-verification)~\citep{kim2026does} per response from the base model to the final OPSD and OPASD checkpoints across all four benchmarks. Across the ten markers, the mean count increases by $70.9\%$ for OPSD but only $4.4\%$ for OPASD. The difference remains after normalizing for response length and within both correct and incorrect groups, with OPSD using \emph{wait}, \emph{alternatively}, and \emph{maybe} markers more often than OPASD (Appendix~\ref{sec:epistemic_dynamics}). Together with OPSD's increasing response length and declining validation accuracy, this result suggests that OPSD produces more repeated uncertainty and reconsideration without improving its reasoning performance. This is because OPSD aligns the next-token distribution without directly guiding which earlier steps the student should revisit to resolve uncertainty. OPASD additionally aligns the student's attention distribution with that of the solution-conditioned teacher, guiding its use of earlier context during reconsideration.

\subsubsection{Efficiency}
\label{sec:training_efficiency}
\setlength{\intextsep}{6pt}
\setlength{\columnsep}{10pt}
\begin{wrapfigure}{r}{0.50\textwidth}
    \centering
    \includegraphics[width=\linewidth]{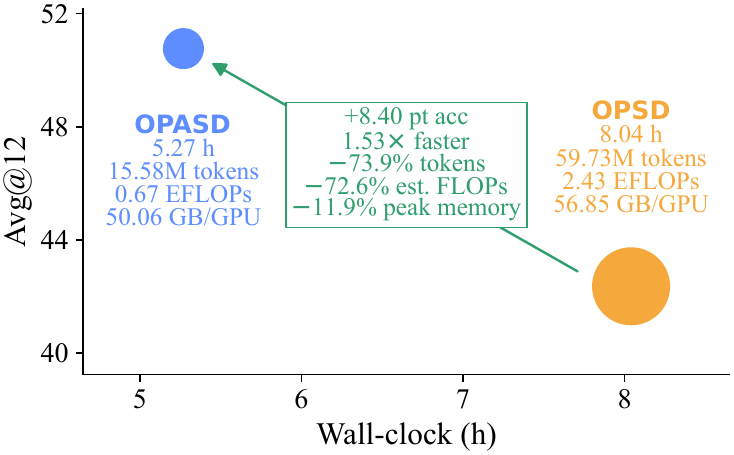}
    \caption{OPASD is Pareto-dominant over OPSD with higher Avg@12 and lower wall-clock time, estimated model compute, generated response tokens, and peak GPU memory. Estimated compute is reported in EFLOPs ($10^{18}$ FLOPs). Bubble area is proportional to the total number of response tokens generated.}
    \label{fig:training_efficiency}
\end{wrapfigure}


Figure~\ref{fig:training_efficiency} compares OPSD and OPASD over $200$ Qwen3-4B training steps on four H100 GPUs. Compared with OPSD, OPASD improves average Avg@12 across the four benchmarks from $42.36$ to $50.76$ and reduces wall-clock time from $8.04$ to $5.27$ hours, a $1.53\times$ speedup. OPASD generates $15.58$ million response tokens compared with $59.73$ million for OPSD, a $73.9\%$ reduction, while also reducing estimated model compute by $72.6\%$ and peak allocated GPU memory by $11.9\%$. Although attention supervision increases actor-update cost from $0.07$ to $0.12$ ms per processed token, OPASD avoids OPSD's response-length growth and substantially reduces the total number of tokens generated and processed. OPASD therefore Pareto-dominates OPSD in accuracy, wall-clock time, generated tokens, and peak memory, with lower estimated model compute as well. We further examine token-learning efficiency in Appendix~\ref{sec:token_learning_efficiency}.

\section{Conclusion}
\label{sec:conclusion}


OPSD distills the privileged teacher’s next-token distribution but leaves its solution-conditioned attention distribution unused. Transferring this signal requires resolving a support mismatch because the teacher can attend to verified-solution positions the student cannot observe. We introduce OPASD to address this mismatch by projecting and renormalizing teacher attention over student-visible positions before distilling it alongside token distributions. Across three model scales and four mathematics benchmarks, OPASD consistently improves accuracy over token-only OPSD. Further analyses show more stable training and validation accuracy, shorter responses, smaller shifts in epistemic-token usage, and lower training cost. Qualitative examples complement these aggregate findings by illustrating more consistent use of intermediate results. Together, these findings show that attention supervision complements token-level self-distillation. 

\textbf{Limitations.} OPASD outperforms OPSD across three Qwen3 model sizes, but its evaluation with other model families would provide a comprehensive assessment. We keep the privileged teacher fixed throughout training. Whether attention supervision offers similar benefits under other teacher-update strategies remains to be studied.



\clearpage

\section*{AI Use Statement}
We used generative AI tools to obtain feedback on experimental design and to assist with the analysis of experimental findings. These tools also supported literature searches, analysis code, figure preparation, and the drafting and editing of parts of the manuscript. The authors reviewed all AI-assisted material and take responsibility for the methods, results, claims, code, and figures in this paper.
\section*{Ethics Statement}

OPASD studies mathematical reasoning using an existing training dataset and established benchmarks. It does not involve human subjects or collect new personal data. Although we evaluate our method on mathematical reasoning, it could be applied in other domains and would inherit risks associated with the underlying models and training data. Models trained with OPASD should be assessed for reliability and misuse before deployment.
\section*{Reproducibility Statement}
Section~3 describes the OPASD training objective and how the privileged teacher's attention is projected onto positions visible to the student. Section~4 reports the models, training data, baselines, evaluation benchmarks, and ablation studies used to assess the method. We will publicly release the code and experiment configurations to enable reproduction of our results.


\bibliography{1.main}
\bibliographystyle{iclr2027_conference}


\appendix

\clearpage

\section{appendix}
\label{sec:appendix}

The appendix provides additional analyses, ablations, implementation
details, and theoretical results supporting the main paper.
\section{Performance Across Sampling Budgets}
\label{sec:pass_at_k_dynamics}

We examine whether OPASD improves the reliability of individual responses or primarily benefits from repeated sampling. Figure~\ref{fig:pass_at_k_dynamics} reports Pass@1 and Pass@12 throughout the Qwen3-4B training run on each benchmark.

\textbf{OPASD produces more reliable individual responses.}
The Pass@1 curves show a consistent separation between the two methods. OPASD maintains higher accuracy across all four benchmarks, whereas OPSD gradually loses performance during later training. At the final checkpoint, OPASD achieves an average Pass@1 of $48.33$, compared with $30.21$ for OPSD. OPASD's attention objective provides later steps with additional guidance about which earlier positions the solution-conditioned teacher emphasizes. Its advantage at Pass@1 therefore reflects stronger individual responses rather than a dependence on repeated attempts.


\textbf{Repeated sampling narrows but does not remove the advantage.}
Pass@12 is more variable because allowing multiple attempts can recover problems that either method solves inconsistently. Nevertheless, OPASD improves the final average Pass@12 from $56.51$ to $63.82$. The gains are clear on AIME24, AIME25, and HMMT25, while the two methods perform comparably on AIME26. The smaller difference under Pass@12 indicates that additional samples partially compensate for the lower single-response accuracy of OPSD. OPASD still retains an advantage because the same attention-supervised model generates each attempt, so its stronger single-response reliability remains relevant when the sampling budget increases.

\begin{figure}[!b]
\centering
\includegraphics[width=\textwidth]{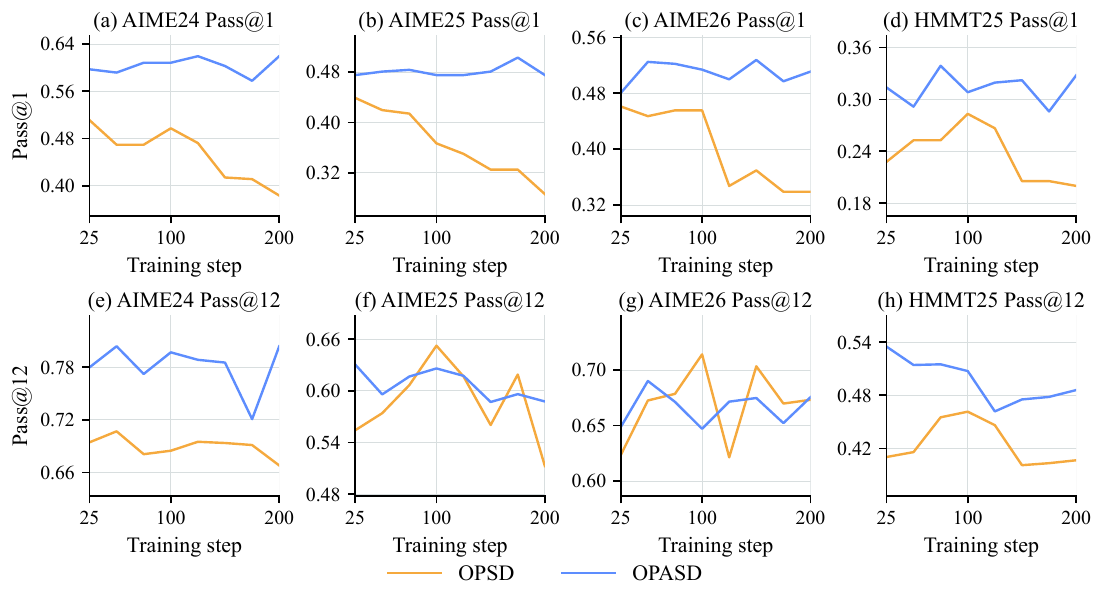}
\caption{Pass@1 and Pass@12 dynamics of OPSD and OPASD over $200$ training steps on Qwen3-4B. OPASD consistently achieves higher Pass@1 across all four benchmarks and retains an overall advantage under Pass@12. Curves show the evaluations recorded for every $25$ training steps.}
\label{fig:pass_at_k_dynamics}
\end{figure}

\section{Token Learning Efficiency}
\label{sec:token_learning_efficiency}

We examine whether OPASD obtains stronger validation performance from the same generation budget. Figure~\ref{fig:token_efficiency} reports Avg@12 accuracy against the cumulative response tokens generated by the training rollouts throughout Qwen3-4B training on each benchmark. Over the complete $200$-step training runs, OPASD generates $15.58$ million response tokens compared with $59.73$ million for OPSD, reducing token usage by $73.9\%$ while consistently achieving higher validation accuracy across all four benchmarks. These results show that OPASD uses the generated-token budget more effectively by maintaining shorter and more productive reasoning trajectories.
This is because OPSD's response-length growth spends much of its budget on repeated uncertainty and reconsideration (Section~\ref{sec:training_dynamics}), which adds tokens without improving answers. OPASD's attention supervision keeps the student focused on relevant context, which is why it needs fewer tokens to reason better.

\begin{figure*}[t]
    \centering
    \includegraphics[width=\textwidth]{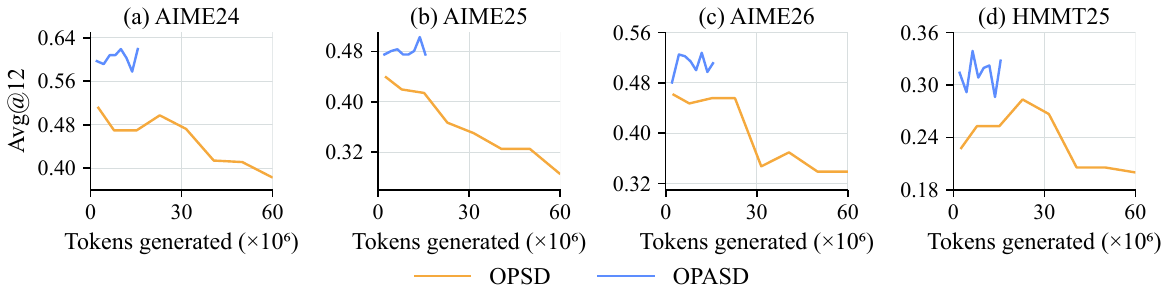}

    \caption{Token-learning efficiency of OPSD and OPASD over $200$ training steps on Qwen3-4B. OPASD consistently achieves higher accuracy across all four benchmarks. The curves report validation accuracy at $25$ step intervals.}
    \label{fig:token_efficiency}
\end{figure*}
\section{Epistemic Verbalization Dynamics}
\label{sec:epistemic_dynamics}

Epistemic verbalization refers to expressions of uncertainty, reconsideration, or self-verification within a reasoning trace. Following~\cite{kim2026does}, we measure ten epistemic markers: \emph{wait}, \emph{hmm}, \emph{perhaps}, \emph{maybe}, \emph{actually}, \emph{alternatively}, \emph{seems}, \emph{might}, \emph{likely}, and \emph{check}. We treat these markers as a behavioral proxy for epistemic verbalization rather than a direct measure of the model's internal uncertainty. Tracking these markers alongside response length and accuracy helps reveal whether training changes the model's reasoning style rather than merely the number of generated tokens. We report marker occurrences per $1{,}000$ response tokens to account for differences in response length.

\textbf{OPASD remains stable while OPSD exhibits increasing epistemic verbalization.}
Figure~\ref{fig:epistemic_density_dynamics} reports epistemic-marker density in the periodic validation responses. Across the four benchmarks, OPASD remains within a narrow range of $12$-$14$ markers per $1{,}000$ tokens throughout training. In contrast, OPSD ranges from  $14$ to $23$ markers and generally increases as training progresses. Figure~\ref{fig:training_dynamics} shows that OPSD's responses also grow longer as its validation accuracy declines, while OPASD remains stable on both measures. For OPSD, increasing reconsideration therefore accompanies more generations without better final answers. OPASD instead supervises how the student attends to earlier reasoning alongside next-token predictions, and maintains shorter responses and higher accuracy.


\begin{figure}[!b]
    \centering
    \includegraphics[width=\columnwidth]{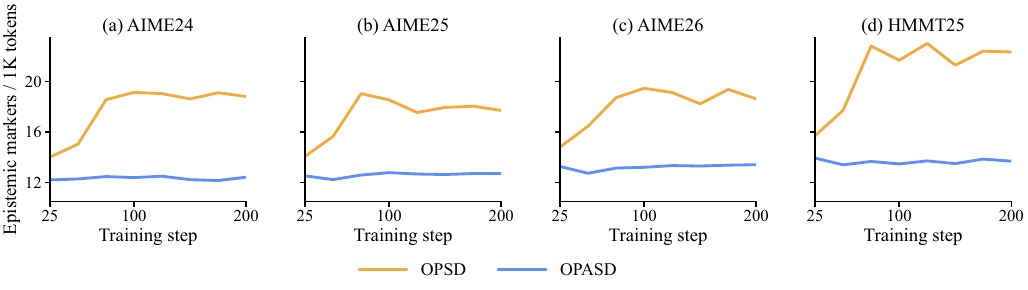}
    \caption{Epistemic-marker density throughout training.
Marker frequency per $1{,}000$ tokens in periodic Qwen3-4B validation responses. OPASD maintains stable epistemic verbalization across all four benchmarks, whereas OPSD exhibits a substantial increase as training progresses.}
    \label{fig:epistemic_density_dynamics}
\end{figure}

\textbf{OPASD maintains lower epistemic-marker density than OPSD regardless of response correctness.} 
Figure~\ref{fig:epistemic_density_correctness} separates the final-checkpoint responses according to correctness. For correct responses, OPSD produces $16.4$ epistemic markers per $1{,}000$ tokens, whereas OPASD produces $12.1$. For incorrect responses, OPSD produces $20.3$ markers, compared with $13.7$ for OPASD. 
Incorrect responses therefore contain more epistemic verbalization under both methods, suggesting that frequent reconsideration is associated with unresolved reasoning. However, OPASD consistently produces fewer markers than OPSD in both groups.  This gap within each group shows that OPASD’s lower marker density is not simply a consequence of its higher accuracy. Its attention supervision guides the use of earlier reasoning, supporting more focused continuations regardless of the final outcome.


\begin{figure}[!t]
    \centering
    \includegraphics[width=0.45\columnwidth]{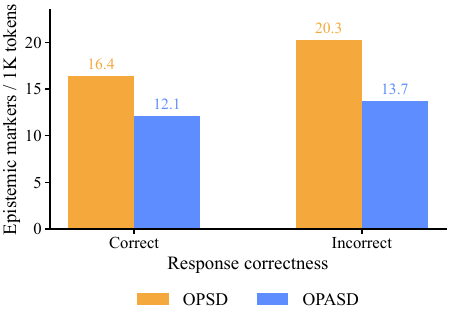}
    \caption{Epistemic-marker density by response correctness.
Marker frequency per $1{,}000$ tokens at the final Qwen3-4B checkpoint, pooled across all four benchmarks. OPASD produces fewer markers than OPSD for both correct and incorrect responses. This pattern is consistent with OPASD avoiding the excessive reconsideration observed in OPSD.}
    \label{fig:epistemic_density_correctness}
\end{figure}

\textbf{OPASD reduces repeated uncertainty while retaining explicit verification.}
Figure~\ref{fig:epistemic_marker_composition} separates the overall density into its individual markers at the final checkpoint. OPSD uses \emph{wait}, \emph{maybe}, \emph{alternatively}, and \emph{perhaps} more frequently than OPASD. The largest difference appears for \emph{wait}, with $7.42$ occurrences per $1{,}000$ tokens for OPSD compared with $3.58$ for OPASD. In contrast, OPASD uses \emph{check} more frequently, with $1.21$ occurrences compared with $0.46$ for OPSD. 
OPASD thus uses fewer markers of uncertainty and alternative search without uniformly reducing epistemic language.
Its attention objective provides guidance over earlier positions to use when reconsidering a step, supporting a more focused response rather than an extended search through alternatives. This matters because the change accompanies shorter responses and higher final-answer accuracy, not merely fewer epistemic markers.


\begin{figure}[!htbp]
    \centering
    \includegraphics[width=0.64\columnwidth]{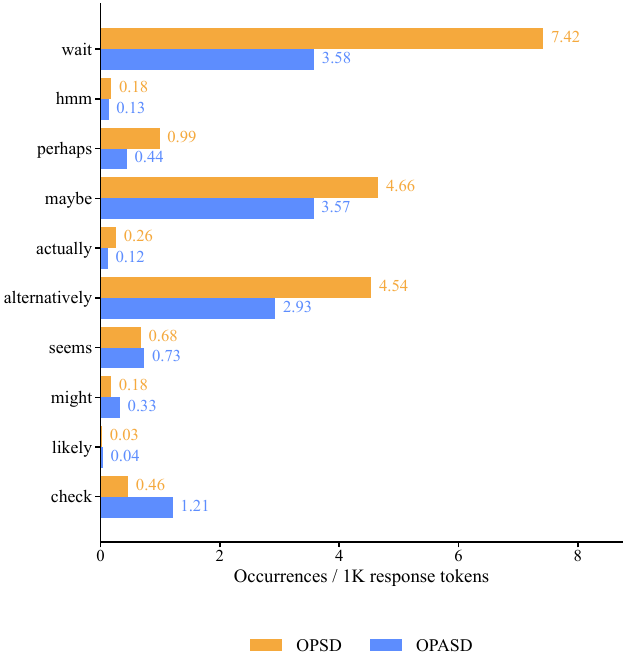}
    \caption{Composition of epistemic verbalization at the final checkpoint.
    Frequency of each epistemic marker per $1{,}000$ response tokens, pooled across all four benchmarks. OPSD's higher overall density is driven primarily by \emph{wait}, \emph{maybe}, \emph{alternatively}, and \emph{perhaps}, whereas OPASD uses \emph{check} more frequently.}
    \label{fig:epistemic_marker_composition}
\end{figure}

\section{Attention Alignment Dynamics}
\label{sec:attention_alignment_dynamics}

We examine whether attention alignment remains effective as the student-generated trajectories evolve during training. Figure~\ref{fig:attention_loss} tracks the JSD between student attention and projected solution-conditioned teacher attention on Qwen3-4B. The loss decreases rapidly during early training and continues to decline thereafter, falling from an average of $0.0055$ over the first 20 steps to $0.0029$ over the final 20 steps.
Because the teacher's target is recomputed for each new trajectory, the decline shows that OPASD keeps aligning attention on the reasoning it currently generates. This sustained alignment allows the teacher's guidance over earlier information to remain relevant as the student's reasoning changes throughout training.


\begin{figure}[!b]
    \centering
    \includegraphics[width=0.45\columnwidth]{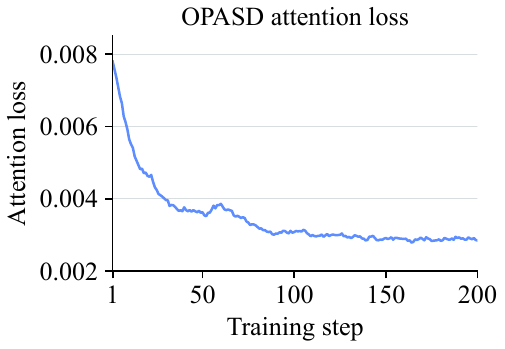}
    \caption{Attention-alignment dynamics of OPASD on Qwen3-4B.
    The divergence between student attention and projected solution-conditioned teacher attention decreases throughout training. The curve is an exponential moving average with smoothing factor $0.7$.}
    \label{fig:attention_loss}
\end{figure}

\section{Robustness across random seeds}
\label{sec:random_seeds}

To assess sensitivity to training randomness, we conduct two additional Qwen3-4B runs with seeds 1 and 123, complementing the original run with seed 42. All runs use the same training data, hyperparameters, compute budget, and experimental configuration. Table~\ref{tab:random_seed_robustness} reports the results across seeds. OPASD achieves an average score of $50.88 \pm 0.14$ (mean $\pm$ standard deviation), with a 95\% confidence interval (CI) of $[50.53, 51.24]$. The small variation across the tested seeds indicates that OPASD's performance is stable with respect to training randomness.

\begin{table}[!b]
\centering
\caption{Robustness of OPASD across random seeds.
Avg@12 accuracy (\%) across three Qwen3-4B training runs with seeds 1, 42, and 123. All runs use the same experimental configuration. The final rows report the mean, sample standard deviation, and 95\% confidence interval across 3 seeds.}
\label{tab:random_seed_robustness}
\small
\setlength{\tabcolsep}{2pt}
\renewcommand{\arraystretch}{1}

\begin{tabular}{lccccc}
\toprule
\textbf{Seed / Statistic}
& \textbf{AIME24}
& \textbf{AIME25}
& \textbf{AIME26}
& \textbf{HMMT25}
& \textbf{Average} \\
\midrule
Seed 1   & 63.27 & 50.83 & 55.33 & 34.72 & 51.04 \\
Seed 42  & 63.33 & 50.27 & 55.56 & 33.89 & 50.76 \\
Seed 123 & 63.54 & 50.63 & 54.87 & 34.36 & 50.85  \\
\midrule
Mean     & 63.38 & 50.58 & 55.25 & 34.32 & 50.88 \\
Std      & 0.14  & 0.28  & 0.35  & 0.42  & 0.14  \\
95\% CI  & $\pm 0.35$ & $\pm 0.70$ & $\pm 0.87$ & $\pm 1.03$ & $\pm 0.36$ \\
\bottomrule
\end{tabular}
\end{table}
\section{Test-Time Generation Budget}
\label{sec:generation_budget}

We examine whether the advantage of OPASD persists across different test-time generation budgets. Both methods are trained with an $8$K-token response cap and evaluated with maximum generation lengths of $16$K, $28$K, and $38$K tokens. As shown in Figure~\ref{fig:generation_budget}, increasing the generation budget improves the average performance of both methods. OPASD outperforms OPSD on every benchmark at all three budgets, with average gains of $8.40$, $6.25$, and $5.81$ points at $16$K, $28$K, and $38$K, respectively. These results show that the advantage of OPASD remains consistent across test-time generation budgets. OPASD stays ahead at every budget because its attention supervision teaches the student to focus on the context emphasized by the solution-conditioned teacher, helping it reason better.

\begin{figure}[!htbp]
\centering
\includegraphics[width=\textwidth]{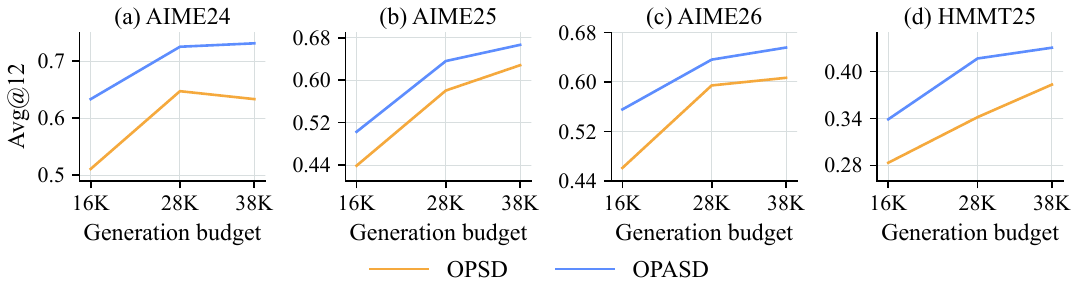}
\caption{Avg@12 performance of OPSD and OPASD across test-time generation budgets on Qwen3-4B.
OPASD consistently outperforms OPSD across all four benchmarks at $16$K, $28$K, and $38$K tokens, while both methods generally benefit from larger generation budgets.}
\label{fig:generation_budget}
\end{figure}

\section{Handling Privileged Attention}
\label{sec:privileged_attention}

We compare three ways to handle teacher attention on positions unavailable to the student using Qwen3-1.7B in Table~\ref{tab:gt_attention_handling}. Student-support projection restricts teacher attention to student-visible positions and renormalizes it. Redistribution transfers the excluded mass to shared prompt positions according to the teacher's attention, while shuffled redistribution randomly changes which shared positions receive that mass. Student-support projection achieves the highest average score of $34.02$, compared with $31.39$ for redistribution and $32.15$ for shuffled redistribution. This advantage is consistent with projection preserving the teacher's relative attention among positions the student can use, whereas redistribution changes those relative weights and shuffling adds arbitrary variation. Theorem~\ref{thm:projection} shows that projection is the unique visible-support distribution closest to the full teacher attention under the attention divergence.


\begin{table}[!b]
\centering
\caption{Handling teacher-only attention mass. Student-support projection achieves a higher average score than redistribution or shuffled redistribution.}
\label{tab:gt_attention_handling}
\small
\setlength{\tabcolsep}{2pt}
\renewcommand{\arraystretch}{1.0}

\begin{tabular}{lccccc}
\toprule
\textbf{Method} & \textbf{AIME24} & \textbf{AIME25} & \textbf{AIME26} & \textbf{HMMT25} & \textbf{Average} \\
\midrule
Redistribution & 40.00 & 31.38 & \textbf{34.44} & 19.72 & 31.39 \\
Shuffled Redistribution        & 42.22 & 31.67 & 33.89 & 20.83 & 32.15 \\
Student-Support Projection           & \textbf{43.05} & \textbf{35.27} & \textbf{34.44} & \textbf{23.33} & \textbf{34.02} \\

\bottomrule
\end{tabular}
\end{table}

\section{Token-Loss Clipping}
\label{sec:ablation_clipping}

We compare three clipping strategies for the token-distillation loss on Qwen3-1.7B in Table~\ref{tab:clipping_comparison}. Point-wise clipping operates on individual vocabulary-level contributions, whereas per-token clipping is applied to the complete KL divergence at each generated position. Using the same threshold of $0.05$, per-token clipping achieves the highest average score of $32.85$, compared with an average score of $32.22$ without clipping and $30.83$ with point-wise clipping. This is because point-wise clipping removes the teacher's strongest token preferences at each position, weakening the token signal that the attention objective complements. Per-token clipping instead preserves the full signal and only limits positions with very large divergence, keeping the two objectives better balanced.

\begin{table}[!t]
\centering
\caption{Effect of token-loss clipping. Per-token clipping achieves the highest average performance compared with point-wise clipping and no clipping.}
\label{tab:clipping_comparison}
\small
\setlength{\tabcolsep}{2pt}
\renewcommand{\arraystretch}{1.0}

\begin{tabular}{lccccc}
\toprule
\textbf{Clipping Strategy} & \textbf{AIME24} & \textbf{AIME25} & \textbf{AIME26} & \textbf{HMMT25} & \textbf{Average} \\
\midrule
Point-wise Clipping & 40.00 & 28.61 & 33.61 & \textbf{21.11} & 30.83 \\
Per-token Clipping  & \textbf{41.94} & 32.50 & \textbf{35.83} & \textbf{21.11} & \textbf{32.85} \\
No Clipping  & 40.00 & \textbf{33.33} & 34.72 & 20.83 & 32.22 \\
\bottomrule
\end{tabular}
\end{table}







\section{Comparison of Distillation Objectives}
\label{sec:objective_components}

Table~\ref{tab:objective_components} compares token-only OPSD against attention-only OPASD and full OPASD on Qwen3-1.7B. Attention-only distillation achieves an average accuracy of $30.97$ and exceeds token-only OPSD by $1.93$ points. Full OPASD combines token- and attention-level distillation and achieves the highest average accuracy of $34.02$. This represents an improvement of $3.05$ points over attention-only distillation and $4.98$ points over token-only OPSD.
Token distillation aligns the student’s next-token distribution with the teacher’s, while attention distillation aligns its attention over student-visible positions with the projected teacher target. Together, these signals supervise both next-token predictions and attention to earlier positions, thus yielding higher accuracy than either objective alone.


\begin{table}[!tb]
\centering
\caption{Comparison of distillation objectives on Qwen3-1.7B.
Full OPASD achieves the highest average accuracy and outperforms both token-only OPSD and attention-only distillation.}
\label{tab:objective_components}
\small
\setlength{\tabcolsep}{2pt}
\renewcommand{\arraystretch}{1.0}

\begin{tabular}{lccccc}
\toprule
\textbf{Training Objective} &
\textbf{AIME24} & \textbf{AIME25} & \textbf{AIME26} &
\textbf{HMMT25} & \textbf{Average} \\
\midrule
Token only (OPSD)
& 36.70 & 28.33 & 32.78 & 18.33 & 29.04 \\
Attention only
& 40.56 & 30.27 & 33.61 & 19.44 & 30.97 \\
Token + Attention (OPASD)
& \textbf{43.05} & \textbf{35.27} & \textbf{34.44} &
\textbf{23.33} & \textbf{34.02} \\
\bottomrule
\end{tabular}
\end{table}
\section{Controlling for Token-Loss Clipping}
\label{sec:matched_token_clipping}

The original OPSD and OPASD configurations use different token-loss clipping strategies. To test whether this difference accounts for OPASD's improvement, we repeat token-only OPSD with the per-token clipping used by OPASD. As shown in Table~\ref{tab:matched_attention_ablation}, this control scores $29.09$ on average, close to the original OPSD score of $29.04$. With per-token clipping held fixed, OPASD scores $34.02$ and outperforms the token-only control on all four benchmarks. The advantage therefore persists when clipping is matched. This is because per-token clipping does not change the token-level signal enough on its own to affect performance, and its benefit appears only when it balances the token loss against the attention objective. The improvement of OPASD over OPSD therefore comes from attention supervision rather than from the clipping strategy.

\begin{table}[!b]
\centering
\caption{Effect of attention supervision on Qwen3-1.7B with per-token clipping held fixed. The original OPSD configuration is included for reference.}
\label{tab:matched_attention_ablation}
\small
\setlength{\tabcolsep}{2pt}
\renewcommand{\arraystretch}{1.0}
\begin{tabular}{lccccc}
\toprule
Method & AIME24 & AIME25 & AIME26 & HMMT25 & Average \\
\midrule
OPSD (point-wise clipping) & 36.70 & 28.33 & 32.78 & 18.33 & 29.04 \\
OPSD (per-token clipping)  & 35.83 & 29.56 & 32.80 & 18.16 & 29.09 \\
OPASD (per-token clipping) & \textbf{43.05} & \textbf{35.27} & \textbf{34.44} & \textbf{23.33} & \textbf{34.02} \\
\bottomrule
\end{tabular}
\end{table}

\section{Additional Qualitative Results}
\label{sec:add_qualitative}

Figures~\ref{fig:aime25_reasoning} and~\ref{fig:aime26_reasoning} compare the instruction-tuned base model, token-only OPSD, and OPASD on an AIME25 probability problem and an AIME26 geometry problem. In the probability example, the base model treats the specific divisors $81$ and $25$ as necessary for the LCM condition, overlooking other divisors that supply the same prime powers. OPSD identifies the exponent conditions but miscounts their overlap, whereas OPASD accounts for the shared divisor and reaches the correct answer. In the geometry example, the base model's coordinates make the required reflex vertex convex, while OPSD's revised coordinates place $B$, $C$, and $D$ on a line. OPASD preserves the nonconvex configuration and uses the resulting area constraint to count the valid side lengths. In both cases, the distinction is whether the constraints established during the setup are carried through to the final calculation.
OPASD’s attention supervision guides the student toward the earlier constraints needed to complete the final calculation.


\begin{figure}[!t]
\centering
\includegraphics[width=0.90\textwidth]{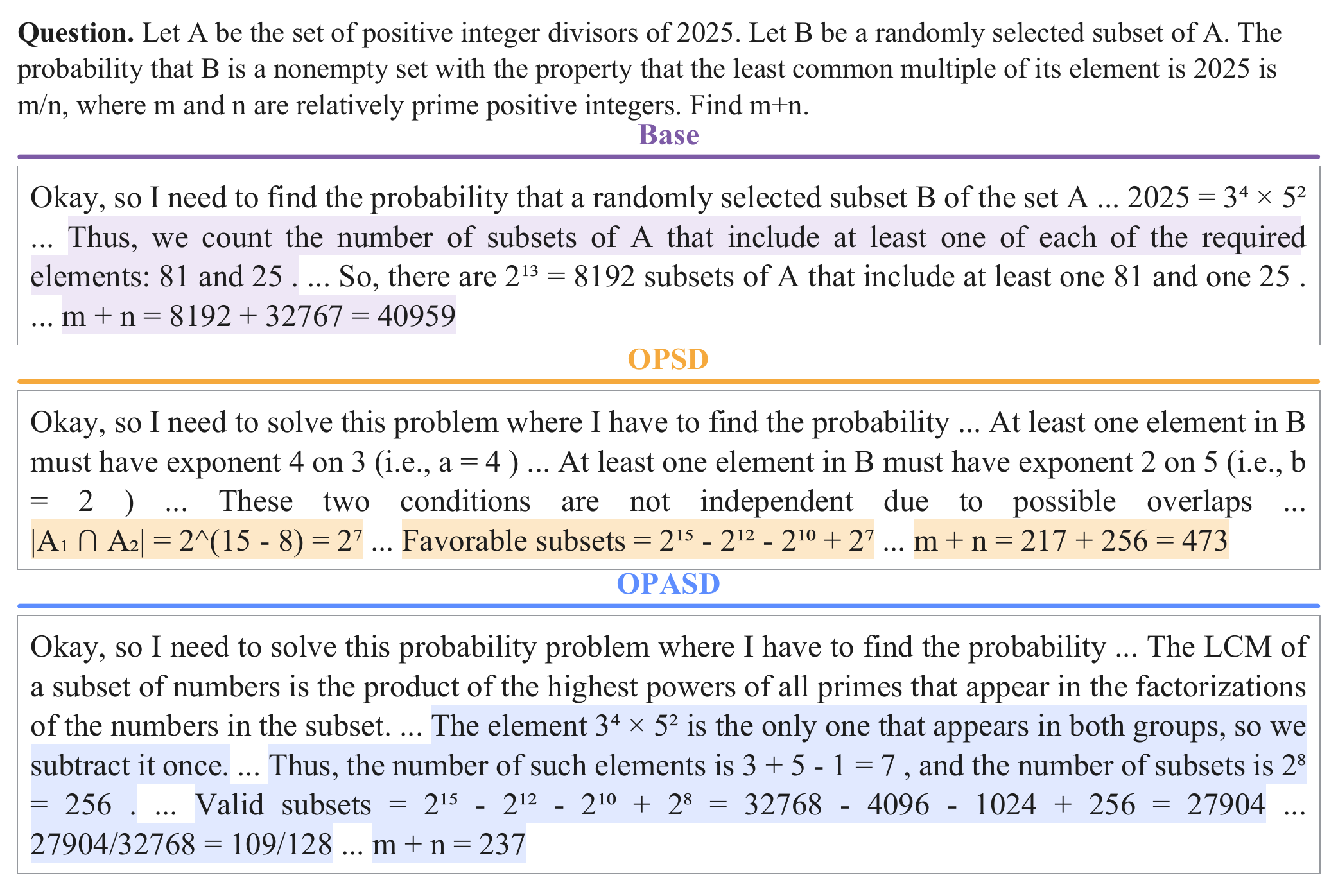}
\caption{Reasoning on an AIME25 probability problem. The base model uses an incorrect counting condition, OPSD miscounts the overlap in inclusion-exclusion, and OPASD reaches the correct answer. Highlighted excerpts mark the decisive steps.}
\label{fig:aime25_reasoning}
\end{figure}

\begin{figure}[!htbp]
\centering
\includegraphics[width=0.90\textwidth]{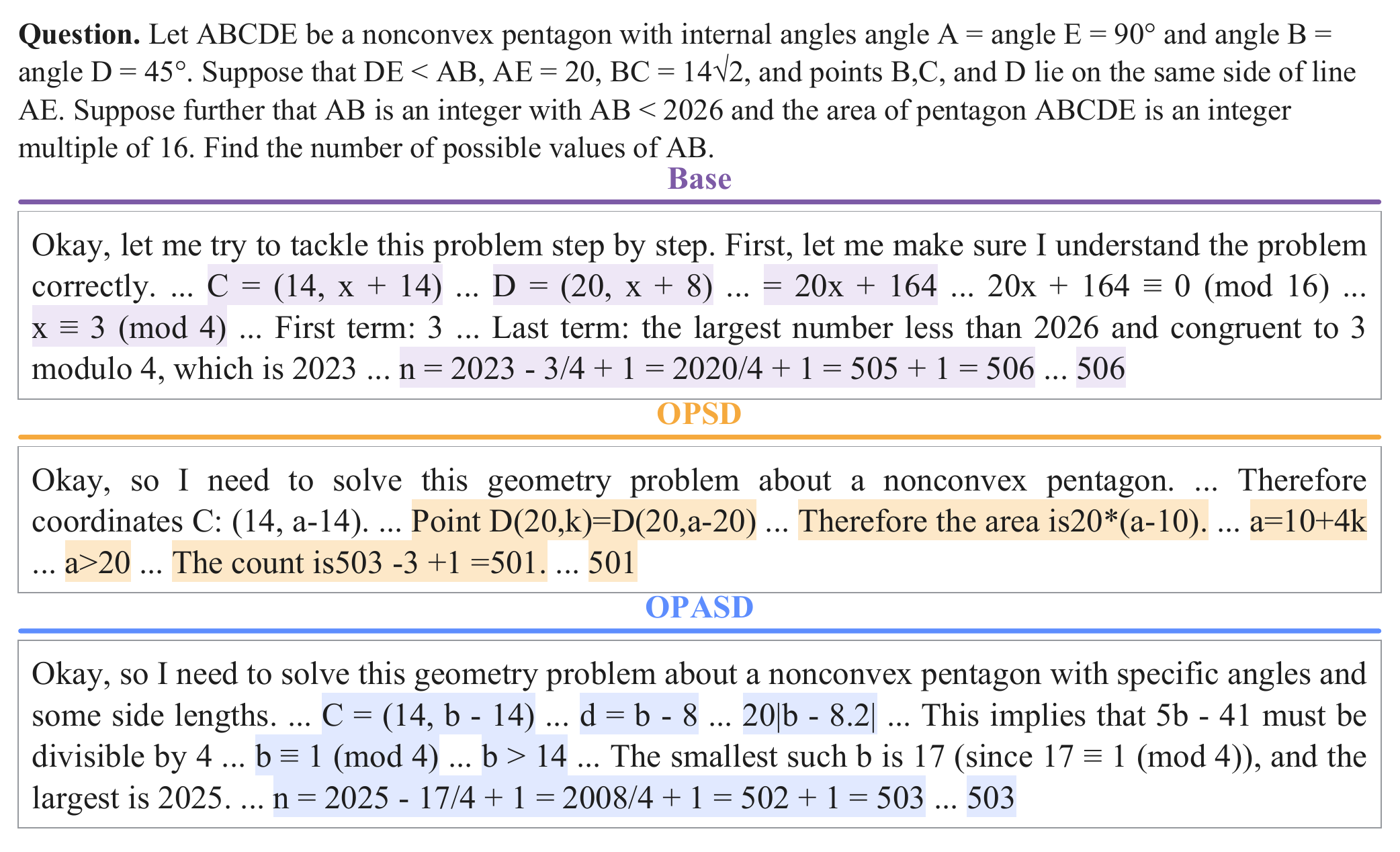}
\caption{Reasoning on an AIME26 geometry problem with an integer-area constraint. The base model and OPSD derive incorrect constraints from their coordinate constructions, while OPASD reaches the correct count. Highlighted excerpts mark the decisive steps.}
\label{fig:aime26_reasoning}
\end{figure}
\section{Hyperparameters}
\label{sec:hyperparameters}
We report the training and evaluation configurations for our OPSD and OPASD experiments in Tables~\ref{tab:training_hyperparameters} and \ref{tab:evaluation_hyperparameters}, respectively. For both methods, we use a Thinking-Mode-off student and a Thinking-Mode-on teacher. OPASD uses the same token-level distillation objective as OPSD and additionally applies attention distillation with $\lambda_{\mathrm{attn}}=0.5$.

\begin{table}[!b]
    \centering
    \caption{Training hyperparameters for OPSD and OPASD.}
    \label{tab:training_hyperparameters}
    \small
    \setlength{\tabcolsep}{2pt}
    \renewcommand{\arraystretch}{1.0}
    \begin{tabular}{lcc}
        \toprule
        \textbf{Parameter} & \textbf{OPSD} & \textbf{OPASD} \\
        \midrule
        Training data & \multicolumn{2}{c}{\shortstack{OpenThoughts-Math-30K (OPSD split)}} \\
        Teacher & \multicolumn{2}{c}{\shortstack{Frozen teacher, conditioned on the reference solution}} \\
        Student thinking mode & Disabled & Disabled \\
        Teacher thinking mode & Enabled & Enabled \\
        LoRA rank $r$ & $64$ & $64$ \\
        LoRA scaling $\alpha$ & $128$ & $128$ \\
        LoRA target modules & \multicolumn{2}{c}{\shortstack{
            \texttt{q\_proj}, \texttt{k\_proj}, \texttt{v\_proj}, \texttt{o\_proj},\\
            \texttt{gate\_proj}, \texttt{up\_proj}, \texttt{down\_proj}}} \\
        Optimizer & AdamW & AdamW \\
        Learning rate & $5\times10^{-6}$ & $5\times10^{-6}$ \\
        Effective batch size & $64$ & $64$ \\
        Training steps & $200$ & $200$ \\
        Maximum prompt length & $2{,}048$ & $2{,}048$ \\
        Maximum completion length & $8{,}192$ & $8{,}192$ \\
        Temperature & $1.1$ & $1.1$ \\
        Top-$p$ & $0.95$  & $0.95$  \\
        Top-$k$ & 20 & 20\\
        \midrule
        Token-loss weight $\lambda_{\mathrm{tok}}$ & $1.0$ & $1.0$ \\
        Token divergence & Forward KL  & Forward KL \\
        Token-loss clipping & Point-wise & Per-token \\
        Clipping threshold & 0.05 & 0.05      \\
        Distillation temperature & $1.1$ & $1.1$ \\
        Vocabulary support & Full vocabulary & Full vocabulary \\
        \midrule
        Attention-loss weight $\lambda_{\mathrm{attn}}$ & - & $0.5$ \\
        Attention divergence & - & Jensen-Shannon \\
        \bottomrule
    \end{tabular}
    
\end{table}

\begin{table}[!t]
    \centering
    \caption{Evaluation hyperparameters.}
    \label{tab:evaluation_hyperparameters}
    \small
    \setlength{\tabcolsep}{2pt}
    \renewcommand{\arraystretch}{1.0}
    \begin{tabular}{lc}
        \toprule
        \textbf{Parameter} & \textbf{Value} \\
        \midrule
        Maximum new tokens & $16{,}384$ \\
        Thinking mode & Enabled \\
        Temperature & $0.6$ \\
        Top-$p$ & $0.95$ \\
        Top-$k$ & $20$ \\
        Min-$p$ & $0.0$ \\
        Samples per prompt & $12$ \\
        Sampling seed & $0$ \\
        \bottomrule
    \end{tabular}
    
\end{table}

\section{Notation Summary}
\label{app:notation}

Table~\ref{tab:notation_summary} summarizes the notation used throughout the OPASD formulation.

\begin{table}[t]
\centering
\caption{Notation summary. Symbols used in the formulation and theoretical analysis of OPASD.}
\label{tab:notation_summary}
\small
\setlength{\tabcolsep}{2pt}
\renewcommand{\arraystretch}{1.0}

\begin{tabular}{
    p{0.18\linewidth}
    p{0.65\linewidth}
    p{0.10\linewidth}
}
\toprule
\textbf{Symbol} & \textbf{Meaning} & \textbf{First Use} \\
\midrule

\multicolumn{3}{l}{\emph{Data, models, and trajectories}} \\

$\mathcal{S}$, $N$
& Reasoning dataset and number of problem--solution pairs
& Sec.~\ref{sec:opsd} \\

$x_i$, $y_i^\star$
& Problem and its verified reference solution
& Sec.~\ref{sec:opsd} \\

$p_\theta$, $p_{\theta_0}$
& Student model with trainable parameters $\theta$; frozen initial model used by the teacher
& Sec.~\ref{sec:opsd} \\

$p_S$, $p_T$
& Student and privileged teacher policies
& Eq.~(\ref{eq:eq1}) \\

$\hat{y}$, $T$
& Student-generated reasoning trajectory and its length
& Eq.~(\ref{eq:eq2}) \\

$t$, $\hat{y}_{<t}$
& Reasoning step and preceding trajectory prefix
& Eq.~(\ref{eq:eq3}) \\

$\mathcal{B}$
& Training minibatch sampled from $\mathcal{S}$
& Alg.~\ref{alg:opasd} \\

\midrule
\multicolumn{3}{l}{\emph{Token and attention distributions}} \\

$p_t^S$, $p_t^T$
& Student and teacher next-token distributions at reasoning step $t$
& Eq.~(\ref{eq:eq3}) \\

$N_h$, $h$
& Number of attention heads and attention-head index
& Eq.~\eqref{eq:attention_aggregation} \\

$A^S$, $A^T$
& Head-averaged final-layer attention matrices of the student and teacher
& Eq.~\eqref{eq:attention_aggregation} \\

$Q_t^S$, $Q_t^T$
& Student and teacher query positions corresponding to token $\hat{y}_t$
& Sec.~~\ref{sec:support_mismatch} \\

$V_t^S$, $V_t^T$
& Student- and teacher-visible attention supports at reasoning step $t$
& Sec.~\ref{sec:support_mismatch} \\

$\mathcal{P}$
& Reference-solution positions available only to the teacher
& Sec.~~\ref{sec:support_mismatch} \\

$A_t^S$, $A_t^T$
& Student and teacher attention distributions obtained by restricting the head-averaged attention rows to their respective visible supports and renormalizing
& Sec.~~\ref{sec:support_mismatch} \\

$\widetilde{A}_t^T$
& Teacher attention projected and renormalized onto the student-visible support
& Eq.~\eqref{eq:projection} \\

\midrule
\multicolumn{3}{l}{\emph{Objectives and hyperparameters}} \\

$D_{\mathrm{tok}}$
& Token-distribution divergence
& Eq.~\eqref{eq:token_loss} \\

$D_{\mathrm{attn}}$
& Generalized Jensen--Shannon divergence between projected teacher attention and student attention
& Eq.~\eqref{eq:attention_loss} \\

$\beta$, $M_t^{(\beta)}$
& JSD interpolation coefficient and corresponding mixture distribution
& Eq.~\eqref{eq:attention_loss} \\

$\ell_{\mathrm{attn}}(t)$
& Per-step attention-distillation loss
& Eq.~\eqref{eq:attention_loss} \\

$\mathcal{L}_{\mathrm{tok}}$
& Token-level distillation objective
& Eq.~\eqref{eq:token_loss} \\

$\mathcal{L}_{\mathrm{attn}}$
& Attention-level distillation objective
& Eq.~\eqref{eq:attention_loss_total} \\

$\lambda_{\mathrm{attn}}$
& Attention-loss weight
& Eq.~\eqref{eq:total_loss} \\

$\mathcal{L}$
& Final joint-distillation objective
& Eq.~\eqref{eq:total_loss} \\

\midrule
\multicolumn{3}{l}{\emph{Theoretical analysis}} \\

$\rho_t$
& Teacher attention mass assigned to reference-solution positions
& App.~\ref{app:prelim} \\

$D_\beta$
& Generalized Jensen--Shannon divergence used in the theoretical analysis
& App.~\ref{app:prelim} \\

$\alpha_t$
& Effective JSD interpolation coefficient on the student-visible support
& App.~\ref{app:projection} \\

$C_\beta(\rho)$
& Minimum divergence determined by the teacher attention mass outside the student-visible support
& App.~\ref{app:projection} \\

\bottomrule
\end{tabular}

\end{table}

\section{Theoretical Analysis}
\label{app:theory}

This appendix provides a theoretical characterization of OPASD's
student-support projection. We show that, among all attention distributions
supported on positions visible to the student, the projected teacher
attention uniquely minimizes the generalized Jensen--Shannon divergence
to the full privileged-teacher attention distribution.

\subsection{Preliminaries}
\label{app:prelim}

\paragraph{Attention distributions.}
For a fixed problem $x$, reference solution $y^\star$, sampled trajectory
$\hat{y}$, and reasoning step $t$, let $V_t^S$ and $V_t^T$ denote the
student and teacher attention supports defined in
Sec.~\ref{sec:support_mismatch}. The reference-solution positions
available only to the teacher are denoted by $\mathcal{P}$, with
\[
V_t^T = V_t^S \cup \mathcal{P},
\qquad
V_t^S \cap \mathcal{P} = \varnothing.
\]


Let $A_t^T$ denote the teacher attention of Sec.~\ref{sec:support_mismatch},
renormalized over $V_t^T$; this leaves Eq.~\eqref{eq:projection} unchanged. We denote the teacher attention mass assigned to the
reference-solution positions by
$\rho_t=\sum_{i\in\mathcal{P}}A_t^T(i)$. The remaining mass on
$V_t^S$ is therefore $1-\rho_t$, and the student-support projection
in Eq.~\eqref{eq:projection} can be written as
\[
1-\rho_t
=
\sum_{i\in V_t^S}A_t^T(i),
\qquad
\widetilde{A}_t^T(i)
=
\frac{A_t^T(i)}{1-\rho_t},
\quad i\in V_t^S.
\]
For the nonempty student-visible support used in OPASD, the denominator
in Eq.~\eqref{eq:projection} is positive under softmax attention, so
$\rho_t<1$ and the projection is well defined. Equivalently,
$\widetilde{A}_t^T$ is the teacher attention conditioned on the attended
position lying in $V_t^S$. As shown in
Eq.~\eqref{eq:ratio_preservation}, this conditioning preserves the
teacher's relative attention between student-visible positions.

\paragraph{Attention divergence.}
For the analysis below, we write $D_\beta(p,q)$ for the generalized
Jensen--Shannon divergence used in Eq.~\eqref{eq:attention_loss},
\[
D_\beta(p,q)
=
\beta D_{\mathrm{KL}}(p\Vert M)
+
(1-\beta)D_{\mathrm{KL}}(q\Vert M),
\qquad
M=\beta p+(1-\beta)q,
\]
where $\beta\in(0,1)$ and $p$ and $q$ are probability distributions
on the same finite set. We use the standard convention
$0\log(0/a)=0$ for $a\ge0$. The divergence is finite because
$M\ge\beta p$ and $M\ge(1-\beta)q$ entrywise. At
$\beta=\frac12$, $D_\beta$ reduces to the Jensen--Shannon
divergence~\citep{lin1991divergence}. The teacher distribution appears
as the first argument, matching the ordering used in
Eq.~\eqref{eq:attention_loss}.

\begin{lemma}[Nonnegativity of $D_\beta$]
\label{lem:dbeta}
For $\beta\in(0,1)$, $D_\beta(p,q)\ge0$, with equality if and only if
$p=q$.
\end{lemma}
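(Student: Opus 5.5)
Both terms of $D_\beta(p,q)$ are weighted Kullback--Leibler divergences against the same mixture $M=\beta p+(1-\beta)q$. The plan is therefore to apply Gibbs' inequality to each term and then characterize when both vanish.

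First, we check that each KL term is well defined and finite. Whenever $p(i)>0$ we have $M(i)\ge\beta p(i)>0$, and likewise whenever $q(i)>0$. So $p\ll M$ and $q\ll M$, and with the convention $0\log(0/a)=0$ both divergences are finite sums over the finite support.

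Second, we invoke Gibbs' inequality. For distributions $r\ll M$ on a finite set, $D_{\mathrm{KL}}(r\Vert M)\ge0$, with equality iff $r=M$. We would prove this in one line via Jensen's inequality for the concave $\log$:
\[
-D_{\mathrm{KL}}(r\Vert M)=\sum_{i:r(i)>0}r(i)\log\frac{M(i)}{r(i)}\le\log\sum_{i:r(i)>0}M(i)\le 0 .
\]
Equality requires $M(i)/r(i)$ to be constant on $\{r>0\}$ and $M$ to place no mass outside that set, which forces $r=M$. Since $\beta,1-\beta>0$, the nonnegative combination gives $D_\beta(p,q)\ge0$.

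Third, we handle the equality case. If $p=q$, then $M=p=q$ and both terms vanish. Conversely, suppose $D_\beta(p,q)=0$. Because both weights are strictly positive and both terms are nonnegative, each KL term must be zero, so $p=M$ and $q=M$, hence $p=q$. The same conclusion also follows from the single relation $p=M=\beta p+(1-\beta)q$, which gives $(1-\beta)(p-q)=0$.

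The only delicate point is the equality clause of Gibbs' inequality when supports differ, namely the zero entries and the convention $0\log 0$. We address this by restricting sums to $\{r>0\}$ as above and using $\sum_{r>0}M=1$ in the equality case.
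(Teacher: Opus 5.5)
Your proposal is correct and follows the paper's own proof: Gibbs' inequality on each KL term, then positivity of $\beta$ and $1-\beta$ forces $p=M=q$ in the equality case, and the converse is checked directly. Your Jensen-based derivation of Gibbs' inequality and the finiteness check via $M\ge\beta p$, $M\ge(1-\beta)q$ only make explicit what the paper cites or states in its preliminaries.
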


\begin{proof}
By Gibbs' inequality, both KL terms in $D_\beta(p,q)$ are nonnegative
and equal zero only when their two arguments are identical. Since
$\beta\in(0,1)$, $D_\beta(p,q)=0$ requires
$p=M$ and $q=M$, where $M=\beta p+(1-\beta)q$. Hence $p=q$.
Conversely, $p=q$ gives $D_\beta(p,q)=0$.
\end{proof}

\subsection{Optimality of Student-Support Projection}
\label{app:projection}

We now compare the projected teacher attention with any other attention
distribution $q$ supported on $V_t^S$. When comparing $q$ with $A_t^T$
over $V_t^T$, we take $q(i)=0$ for $i\in\mathcal{P}$. Define
\[
\alpha_t
=
\frac{\beta(1-\rho_t)}{1-\beta\rho_t},
\qquad
C_\beta(\rho)
=
\beta\rho\log\frac{1}{\beta}
+
\beta(1-\rho)\log\frac{1-\rho}{1-\beta\rho}
-
(1-\beta)\log(1-\beta\rho).
\]

\begin{theorem}[Optimality of the projected target]
\label{thm:projection}
For $\beta\in(0,1)$ and any attention distribution $q$ supported on
$V_t^S$,
\begin{equation}
D_\beta(A_t^T,q)
=
C_\beta(\rho_t)
+
(1-\beta\rho_t)
D_{\alpha_t}(\widetilde{A}_t^T,q).
\label{eq:app_jsd_projection}
\end{equation}
Hence $\widetilde{A}_t^T$ is the unique minimizer of
$D_\beta(A_t^T,q)$ over all attention distributions supported on
$V_t^S$. The minimum value $C_\beta(\rho_t)$ is zero if and only if
$\rho_t=0$ and is strictly increasing in $\rho_t$.
\end{theorem}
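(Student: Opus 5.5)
The plan is to prove the identity \eqref{eq:app_jsd_projection} by direct computation, splitting every sum over $V_t^T$ into its part on $\mathcal{P}$ and its part on $V_t^S$. The optimality and uniqueness claims then follow from Lemma~\ref{lem:dbeta}, and the properties of $C_\beta$ from a one-variable derivative computation.

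\textbf{Step 1: factor the mixture on each part of the support.} Write $p=A_t^T$ and $\rho=\rho_t$. For a candidate $q$ supported on $V_t^S$, set $M=\beta p+(1-\beta)q$.
\begin{itemize}
\item On $\mathcal{P}$ we have $q=0$, so $M=\beta p$.
\item On $V_t^S$ we have $p=(1-\rho)\widetilde{A}_t^T$. The key algebraic observation is that $\beta(1-\rho)=(1-\beta\rho)\alpha_t$ and $1-\beta=(1-\beta\rho)(1-\alpha_t)$. The second identity follows from $1-\alpha_t=(1-\beta)/(1-\beta\rho)$. Hence $M=(1-\beta\rho)\widetilde{M}$ on $V_t^S$, where $\widetilde{M}=\alpha_t\widetilde{A}_t^T+(1-\alpha_t)q$ is exactly the mixture appearing in $D_{\alpha_t}(\widetilde{A}_t^T,q)$.
\end{itemize}

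\textbf{Step 2: expand the two KL terms.}
\begin{itemize}
\item The term $\beta D_{\mathrm{KL}}(p\Vert M)$ splits into two pieces. The $\mathcal{P}$-part contributes $\beta\sum_{\mathcal{P}}p\log(1/\beta)=\beta\rho\log(1/\beta)$. The $V_t^S$-part contributes $\beta(1-\rho)\log\frac{1-\rho}{1-\beta\rho}+\beta(1-\rho)D_{\mathrm{KL}}(\widetilde{A}_t^T\Vert\widetilde{M})$.
\item The term $(1-\beta)D_{\mathrm{KL}}(q\Vert M)$ equals $-(1-\beta)\log(1-\beta\rho)+(1-\beta)D_{\mathrm{KL}}(q\Vert\widetilde{M})$.
\end{itemize}
The constant pieces sum to $C_\beta(\rho)$. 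Rewriting the coefficients $\beta(1-\rho)$ and $1-\beta$ via Step 1, the two remaining KL terms combine into $(1-\beta\rho)D_{\alpha_t}(\widetilde{A}_t^T,q)$. The convention $0\log 0=0$ handles zero entries. This establishes \eqref{eq:app_jsd_projection}.

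\textbf{Step 3: optimality and uniqueness.} Since $\rho_t<1$ (shown in the preliminaries), we have $\alpha_t\in(0,1)$ and $1-\beta\rho_t>0$. By Lemma~\ref{lem:dbeta} applied with parameter $\alpha_t$, the second term of \eqref{eq:app_jsd_projection} is nonnegative and vanishes exactly when $q=\widetilde{A}_t^T$. Therefore $\widetilde{A}_t^T$ is the unique minimizer, and the minimum value is $C_\beta(\rho_t)$.

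\textbf{Step 4: properties of $C_\beta$.}
\begin{itemize}
\item At $\rho=0$, every term of $C_\beta$ vanishes, so $C_\beta(0)=0$.
\item Differentiating in $\rho$, the terms coming from differentiating the arguments of the logarithms cancel: $\beta(1-\rho)\left[-\frac{1}{1-\rho}+\frac{\beta}{1-\beta\rho}\right]=-\frac{\beta(1-\beta)}{1-\beta\rho}$, which exactly offsets the derivative of $-(1-\beta)\log(1-\beta\rho)$. What remains is
\[
C_\beta'(\rho)=\beta\log\frac{1-\beta\rho}{\beta(1-\rho)}=\beta\log\frac{1}{\alpha_t}.
\]
\item This derivative is strictly positive because $1-\beta\rho>\beta-\beta\rho$. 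Hence $C_\beta$ is strictly increasing on $[0,1)$, so $C_\beta(\rho)>0$ for every $\rho>0$. This gives the "if and only if" statement.
\end{itemize}

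The main obstacle is the bookkeeping in Step 2: recognizing the rescaled mixture $\widetilde{M}$ with the effective coefficient $\alpha_t$, and verifying that the leftover constants assemble exactly into $C_\beta$. I would check this first on the two coefficient identities from Step 1.
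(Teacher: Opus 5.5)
Your proposal is correct and follows the paper's proof essentially step for step. It uses the same factorization $M=(1-\beta\rho_t)\bigl(\alpha_t\widetilde{A}_t^T+(1-\alpha_t)q\bigr)$ on $V_t^S$ with $M=\beta A_t^T$ on $\mathcal{P}$, the same split of the two KL terms into constants that assemble into $C_\beta(\rho_t)$ plus $(1-\beta\rho_t)D_{\alpha_t}(\widetilde{A}_t^T,q)$, uniqueness via Lemma~\ref{lem:dbeta}, and the same derivative $C_\beta'(\rho)=\beta\log\frac{1-\beta\rho}{\beta(1-\rho)}$; your rewriting of it as $\beta\log(1/\alpha_t)$ neatly makes positivity follow from $\alpha_t<1$, which the paper checks explicitly via $(1-\beta\rho_t)-\beta(1-\rho_t)=1-\beta>0$.
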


\begin{proof}
Write $s=1-\rho_t$ and $Z=1-\beta\rho_t$. Since $\rho_t<1$ and
$\beta\in(0,1)$, both are positive. Moreover,
\[
\alpha_t=\frac{\beta s}{Z},
\qquad
1-\alpha_t=\frac{1-\beta}{Z},
\]
and
$Z-\beta s=1-\beta>0$. Hence $0<\alpha_t<1$.

Let $M=\beta A_t^T+(1-\beta)q$ be the mixture in
$D_\beta(A_t^T,q)$. Since $q$ assigns zero mass to $\mathcal{P}$,
$M=\beta A_t^T$ on $\mathcal{P}$. On $V_t^S$, we have
$A_t^T=s\widetilde{A}_t^T$. Defining
\[
R
=
\alpha_t\widetilde{A}_t^T+(1-\alpha_t)q,
\]
we obtain
\[
M
=
\beta s\widetilde{A}_t^T+(1-\beta)q
=
ZR
\qquad\text{on }V_t^S.
\]
Since
$R\ge\alpha_t\widetilde{A}_t^T$ and
$R\ge(1-\alpha_t)q$ entrywise, the KL terms below are finite.

Splitting the teacher KL term over $\mathcal{P}$ and $V_t^S$ gives
\begin{align*}
\beta D_{\mathrm{KL}}(A_t^T\Vert M)
&=
\beta\rho_t\log\frac{1}{\beta}
+
\beta s
\sum_{i\in V_t^S}
\widetilde{A}_t^T(i)
\log
\frac{s\widetilde{A}_t^T(i)}{ZR(i)}
\\
&=
\beta\rho_t\log\frac{1}{\beta}
+
\beta s\log\frac{s}{Z}
+
\beta s
D_{\mathrm{KL}}(\widetilde{A}_t^T\Vert R).
\end{align*}
Since $q$ assigns zero mass to $\mathcal{P}$, the second KL term is
\[
(1-\beta)D_{\mathrm{KL}}(q\Vert M)
=
-(1-\beta)\log Z
+
(1-\beta)D_{\mathrm{KL}}(q\Vert R).
\]
The terms independent of $R$ are exactly $C_\beta(\rho_t)$. Using
$\beta s=Z\alpha_t$ and $1-\beta=Z(1-\alpha_t)$, the remaining terms
give
\begin{align*}
D_\beta(A_t^T,q)
&=
C_\beta(\rho_t)
+
Z\Bigl[
\alpha_t
D_{\mathrm{KL}}(\widetilde{A}_t^T\Vert R)
+
(1-\alpha_t)
D_{\mathrm{KL}}(q\Vert R)
\Bigr]
\\
&=
C_\beta(\rho_t)
+
(1-\beta\rho_t)
D_{\alpha_t}(\widetilde{A}_t^T,q),
\end{align*}
which proves Eq.~\eqref{eq:app_jsd_projection}.

By Lemma~\ref{lem:dbeta},
$D_{\alpha_t}(\widetilde{A}_t^T,q)\ge0$, with equality if and only if
$q=\widetilde{A}_t^T$. Since $1-\beta\rho_t>0$, the projected teacher
attention is the unique minimizer.

Finally, $C_\beta(0)=0$. Differentiating gives
\[
C_\beta'(\rho)
=
\beta\log\frac{1-\beta\rho}{\beta(1-\rho)},
\]
where the non-logarithmic terms cancel because
$\beta(1-\rho)+(1-\beta)=1-\beta\rho$.
For $0\le\rho<1$,
$1-\beta\rho>\beta(1-\rho)$ because their difference is
$1-\beta>0$. Hence $C_\beta'(\rho)>0$, so $C_\beta(\rho)$ is zero
if and only if $\rho=0$ and is strictly increasing in $\rho$.
\end{proof}

Theorem~\ref{thm:projection} provides a theoretical justification for
the student-support projection. Among all attention distributions
supported on the student-visible positions, $\widetilde{A}_t^T$
uniquely minimizes the generalized Jensen--Shannon divergence to the
full privileged-teacher attention, while the minimum divergence
increases with the teacher attention mass assigned to positions
unavailable to the student.

\end{document}